\documentclass[sigconf]{aamas} 

\doi{JTHG8732}

\newif\ifarxiv
\arxivfalse 
\arxivtrue 

\usepackage{balance} 
\usepackage[utf8]{inputenc} 
\usepackage[T1]{fontenc}    
\usepackage{hyperref}       
\usepackage{url}            
\usepackage{booktabs}       
\usepackage{amsfonts}       
\usepackage{nicefrac}       
\usepackage{microtype}      
\usepackage{xcolor}         

\usepackage{amsmath}
\usepackage{graphicx}
\usepackage{algorithm}
\usepackage{algpseudocode}
\usepackage{multirow}
\usepackage{wrapfig}
\usepackage{subcaption}
\usepackage[capitalize]{cleveref}
\usepackage{amsthm}
\usepackage[toc,page,header]{appendix}
\usepackage{minitoc}
\usepackage{etoc}
\usepackage[most]{tcolorbox}
\makeatletter
\@namedef{ver@lineno.sty}{9999/12/31}
\@namedef{opt@lineno.sty}{}
\makeatother
\usepackage[frozencache,cachedir=minted]{minted}
\usepackage{float}

\usepackage{xspace}
\usepackage{comment}
\newcommand{\algo}{L2HR\xspace}
\newcommand{\algofullname}{Language to Hierarchical Rewards\xspace}
\newtheorem{definition}{Definition}
\newtheorem{proposition}{Proposition}
\newtheorem{property}{Property}

\newcommand\myeq{\mkern1.5mu{=}\mkern1.5mu}

\usepackage{enumitem}
\setlist[itemize]{leftmargin=*}
\setlist[enumerate]{leftmargin=*}
\usepackage[normalem]{ulem}

\usepackage{tikz}
\usetikzlibrary{shapes.geometric}
\usetikzlibrary{arrows,backgrounds,fit,positioning}

\makeatletter
\gdef\@copyrightpermission{
  \begin{minipage}{0.2\columnwidth}
   \href{https://creativecommons.org/licenses/by/4.0/}{\includegraphics[width=0.90\textwidth]{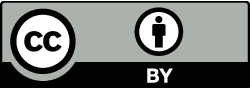}}
  \end{minipage}\hfill
  \begin{minipage}{0.8\columnwidth}
   \href{https://creativecommons.org/licenses/by/4.0/}{This work is licensed under a Creative Commons Attribution International 4.0 License.}
  \end{minipage}
  \vspace{5pt}
}
\makeatother

\setcopyright{ifaamas}
\acmConference[AAMAS '26]{Proc.\@ of the 25th International Conference
on Autonomous Agents and Multiagent Systems (AAMAS 2026)}{May 25 -- 29, 2026}
{Paphos, Cyprus}{C.~Amato, L.~Dennis, V.~Mascardi, J.~Thangarajah (eds.)}
\copyrightyear{2026}
\acmYear{2026}
\acmDOI{}
\acmPrice{}
\acmISBN{}

\acmSubmissionID{1486}

\newcommand\blfootnote[1]{%
  \begingroup
  \renewcommand\thefootnote{}\footnote{#1}%
  \addtocounter{footnote}{-1}%
  \endgroup
}

\begin{document}

\title[Hierarchical Reward Design from Language]{Hierarchical Reward Design from Language: Enhancing Alignment of Agent Behavior with Human Specifications}


\author{Zhiqin Qian}
\affiliation{
  \institution{Rice University}
  \city{Houston, TX}
  \country{USA}}
\email{bill.qian@rice.edu}

\author{Ryan Diaz}
\affiliation{
  \institution{Rice University}
  \city{Houston, TX}
  \country{USA}}
\email{ryandiaz@rice.edu}

\author{Sangwon Seo}
\affiliation{
  \institution{Rice University}
  \city{Houston, TX}
  \country{USA}}
\email{sangwon.seo@rice.edu}

\author{Vaibhav Unhelkar}
\affiliation{
  \institution{Rice University}
  \city{Houston, TX}
  \country{USA}}
\email{vaibhav.unhelkar@rice.edu}
\begin{abstract}
When training artificial intelligence (AI) to perform tasks, humans often care not only about \textit{whether} a task is completed but also \textit{how} it is performed. 
As AI agents tackle increasingly complex tasks, aligning their behavior with human-provided specifications becomes critical for responsible AI deployment.  
\textit{Reward design} provides a direct channel for such alignment by translating human expectations into reward functions that guide reinforcement learning (RL).
However, existing methods are often too limited to capture nuanced human preferences that arise in long-horizon tasks. 
Hence, we introduce \textit{Hierarchical Reward Design from Language (HRDL)}: a problem formulation that extends classical reward design to encode richer behavioral specifications for hierarchical RL agents.
We further propose \textit{\algofullname~(\algo)} as a solution to HRDL.  
Experiments show that AI agents trained with rewards designed via \algo not only complete tasks effectively but also better adhere to human specifications. 
Together, HRDL and \algo advance the research on human-aligned AI agents.
\end{abstract}


\keywords{Reward Design; Human-Centered AI; Hierarchical RL}


\pagestyle{fancy}


\maketitle 


\section{Introduction}
\label{sec:intro}

\begin{figure*}[t]
    \centering
    \includegraphics[height=15em]{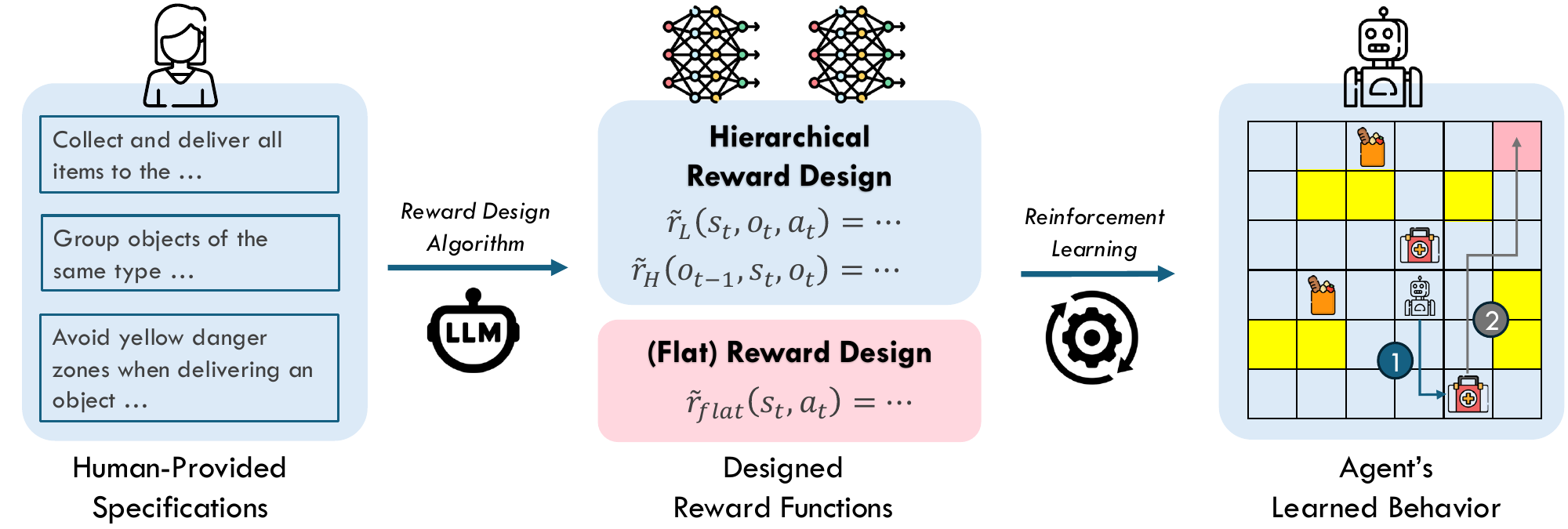}
    \caption{This work introduces the Hierarchical Reward Design from Language (HRDL) problem. Unlike prior work on reward design, HRDL decomposes reward design into low- and high-level components $(\tilde{r}_L, \tilde{r}_H)$. Language to Hierarchical Rewards (L2HR), our proposed solution to HRDL, leverages language models to guide the synthesis of these hierarchical rewards, enabling existing RL algorithms to train agents that are better aligned with human specifications.}
    \label{fig:intro}
\end{figure*}

Robots and other AI agents are increasingly being deployed in human-centric environments, such as homes, hospitals, and disaster zones~\citep{scopelliti2004if, kidd2008robots, pantofaru2012exploring, unhelkar2018human, unhelkar2019semi, gonzalez2021service, wang2024mosaic, qian2025astrid, seo2025socratic}.
Their usefulness depends not only on task completion, but also on respecting human intentions, operational rules, and safety requirements (collectively referred to as \textit{behavior specifications}).

Aligning agent behavior with these specifications is central to their responsible deployment, with various paradigms being actively explored for conveying such specifications~\citep{chernova2022robot}.
This work focuses on \textit{reward design}, a paradigm where humans convey specifications through reward functions that guide reinforcement learning (RL), typically before AI deployment.
This paradigm is particularly suitable for use cases where specifications are relatively stable and the costs of reward design can be amortized, such as during the repeated use of AI agents by humans in domain-specific contexts.
\ifarxiv
\blfootnote{This article is an extended version of an identically-titled paper accepted at the \textit{International Conference on Autonomous Agents and Multiagent Systems (AAMAS 2026)}.}
\else
\blfootnote{An extended version of this paper, which includes the Appendix and supplementary material mentioned in the text, is available at \url{http://tiny.cc/hrdl-appendix}}
\fi

As AI agents take on increasingly complex tasks, more advanced reward design methods are needed to capture equally complex specifications.
Humans rarely reason about tasks and associated specifications as monolithic goals~\citep{miller2017plans, botvinick2009hierarchically, correa2023humans, catrambone1998subgoal, ramaraj2023analysis, kim2013learnersourcing}.
Instead, we naturally break them into subtasks.
Hierarchical frameworks in RL mirror this structure by decomposing tasks into subtasks and organizing them over long horizons~\citep{dayan1992feudal,parr1997reinforcement,sutton1999between,dietterich2000hierarchical}. 
This \textit{hierarchical approach to policy learning} has enabled agents to complete tasks of increasingly longer horizons.
However, \textit{the reward design for these hierarchical RL agents remains largely unexplored,} thereby limiting human-AI alignment.

As illustrated in \cref{fig:intro}, specifications for long-horizon tasks often include details on  \textit{what} subtasks to perform, in \textit{which} order, and \textit{how} they are executed. 
Existing reward design methods encode these specifications via a flat reward function of the form $\tilde{r}_{flat}(s, a)$.
We show \textit{theoretically and empirically that flat rewards are fundamentally limited in capturing specifications for long-horizon tasks}.
To address this limitation:
\begin{itemize}
    \item We introduce the \textit{Hierarchical Reward Design (HRD)} problem, which enables designers to express behavioral specifications inspired by the same structured way people naturally think and teach. Unlike the classical (flat) reward design problem~\citep{singh2009rewards}, HRD admits reward solutions that enable encoding of complex specifications for long-horizon tasks, capturing both what subtasks to perform and how to execute them. HRD is a general formulation that can be instantiated with multiple input modalities, analogous to how flat reward design has been realized via proxy signals or language~\citep{hadfield2017inverse, ma2023eureka, kwon2023reward}.  Because natural language is an intuitive medium for specifying layered instructions, we then provide a language-based instantiation called \textit{Hierarchical Reward Design from Language (HRDL, pronounced ``hurdle'')}.
    \item We prove that hierarchical rewards of HRDL are strictly more expressive than flat rewards used by prior works, while remaining compatible with standard decision-making frameworks (i.e. Markov and semi-Markov Decision Processes) and RL algorithms. 
    \item We present \textit{\algofullname (\algo)}, an initial solution to HRDL that generates hierarchical rewards directly from natural language specifications, making reward design more accessible while leveraging the reasoning capabilities of large language models~\citep{achiam2023gpt, huang2022towards, li2022competition}. \algo\ produces reward structures that guide both high-level subtask selection and low-level execution.
\end{itemize}

Human subject and numerical experiments demonstrate hierarchical reward design's advantages over flat reward design.
Hierarchical reward design (coupled with hierarchical RL) enables AI agents to successfully complete tasks and better align their behavior with language specifications.
We view this work as an initial but important step toward aligning AI systems with human expectations through the lens of HRD.
Through theoretical analysis and empirical findings, this paper lays the groundwork for future research on designing human-aligned reward structures that employ hierarchies and human input.

\section{Background and Related Work}
This work focuses on AI agents tasked with sequential decision-making tasks, which are commonly modeled using Markov Decision Processes (MDP) or its variants~\cite{puterman2014markov, sutton2018reinforcement}.
An MDP is defined by the tuple $\mathcal{M} \doteq (\mathcal{S}, \mathcal{A}, T$, $r, \gamma, h)$, where $\mathcal{S}$ and $\mathcal{A}$ are the state and action spaces, $T(s'|s,a)$ the transition dynamics, $r(s, a)$ the reward function, $\gamma\in[0,1]$ the discount factor, and $h$ the horizon.
MDPs can be solved using RL, which aims to find a policy $\pi(a|s)$ that maximizes the expected discounted return $\mathbb{E}[\sum_{t=0}^{h} \gamma^{t} r(s_t,a_t)]$.

\subsection{Hierarchical Reinforcement Learning}
While capable, standard RL algorithms struggle with long-horizon tasks.
Hierarchical RL (HRL) seeks to solve MDPs with long horizons by decomposing them into simpler subtasks ~\citep{dayan1992feudal,parr1997reinforcement,sutton1999between,dietterich2000hierarchical,seo2024idil,seo2025hierarchical}. 
A widely used HRL paradigm is the options framework~\cite{sutton1999between, bacon2017option, zhang2019dac}.
Here, the agent has access to a discrete set of temporally-extended behaviors called options $\mathcal{O}$. 
Each $o \in \mathcal{O}$ is associated with an intra-option policy $\pi_L(a|s,o)$ and a termination condition $\beta(o^-, s)$.
A high-level policy $\pi_H(o|o^-,s)$ selects options, while the low-level policy executes primitive actions until the selected option terminates.
The reward model for options computes the expected cumulative reward until option termination.
Following~\cite{sutton1999between}, we let $\mathcal{E}(o, s, t)$ denote the event where option $o$ is initiated in state $s$ at timestep $t$, and define the option-level reward as:%
\begin{gather} 
    r_{opt}(s, o) \doteq \mathbb{E}[\Sigma_{i=1}^k \gamma^{i-1} r_{t+i} \mid \mathcal{E}(o, s, t)]
\label{eq:taskr-until-termination}
\end{gather}%
where $k$ denotes the number of steps after which the initiated option $o$ terminates, determined by its termination condition $\beta$.

Another line of HRL research follows the \textit{feudal/goal-conditioned framework} \citep{dayan1992feudal, vezhnevets2017feudal, kulkarni2016hierarchical, nachum2018data, jiang2019language}, which also decomposes a task into subtasks but differs in how the hierarchical policies are trained and how reward signals are assigned.
In this framework, the high-level manager selects subgoals and receives a \emph{task} reward, as in the options framework.
However, unlike the options framework, the low-level worker receives a separate \emph{pseudo-reward} $r_p(s, o, a)$ that measures progress toward achieving the current subgoal.

\subsection{Reward Design}
A core challenge in using MDPs and RL is reward design~\citep{singh2009rewards, sutton2018reinforcement}.
Given a well-designed reward function, agents can use RL algorithms to solve the MDP.
However, in practice, the design of a reward function is non-trivial and can lead to a host of problems, including poor alignment between humans and agents~\cite{amodei2016concrete}.

\subsubsection{Reward Design Problem.}
To formally study reward design, \citeauthor{singh2009rewards} introduce the (flat) Reward Design Problem (RDP)~\cite{singh2009rewards}.
RDP is formalized as a tuple $P = (\mathcal{M}_p, \mathcal{R}, \mathcal{A}_{\mathcal{M}_p}, F)$, where 
\begin{itemize}
    \item $\mathcal{M}_p = (\mathcal{S}, \mathcal{A}, T, \gamma, h)$ is the \textit{world model}; 
    \item $\mathcal{R}$ is the space of reward functions; 
    \item $\mathcal{A}_{\mathcal{M}_p}(r) : \mathcal{R} \rightarrow \Pi$ is an algorithm to compute policy $\pi: \mathcal{S} \rightarrow \Delta(\mathcal{A})$ that optimizes reward $r \in \mathcal{R}$ in the MDP $(\mathcal{M}_p, r)$; and
    \item $F: \Pi \rightarrow \mathbb{R}$ is the fitness function that produces a scalar evaluation of a policy, only accessible via policy queries. 
\end{itemize}%
The RDP seeks a reward function $r \in \mathcal{R}$ such that \textit{the policy $\pi := \mathcal{A}_{\mathcal{M}_p}(r)$ that optimizes $r$} achieves the highest Fitness score $F(\pi)$.
Rather than treating the reward as fixed and exogenous, RDP reframes reward design as a \textit{search problem}, a perspective that has profoundly shaped subsequent research. 
It inspired methods that optimize or evolve reward functions directly~\cite{sorg2010reward, niekum2010genetic} as well as formulations that infer or generate them from indirect signals, such as Inverse Reward Design~\cite{hadfield2017inverse}, which recovers true rewards from proxy rewards, and Eureka~\cite{ma2023eureka}, which synthesizes executable rewards from natural language. 
Many other paradigms can be viewed as RDP instantiations: inverse RL~\cite{abbeel2004apprenticeship, ziebart2008maximum, ho2016generative, fu2017learning} treats expert behavior as evidence for the reward search, while preference-based learning~\cite{sadigh2017active, basu2018learning} and RLHF~\cite{ouyang2022training, bai2022training} extend this to human feedback.

\begin{figure}[t]
\centering
    \begin{tikzpicture}[fill=gray]
    \node (a) [%
        draw, ellipse, minimum width=4.2cm, minimum height=2.4cm] 
        {};
    \node (b) [%
        draw, ellipse, minimum width=4.2cm, minimum height=2.4cm, right of=a, xshift=1.4cm] 
        {};
    \path (a) -- (b) node[midway] (c) {\textbf{HRDL}};
    
    \node (a1) [%
        left= 5mm of c.west, yshift=4mm]
        {\it Hierarchical};
    \node (a2) [%
        left= 5mm of c.west, yshift=0mm]
        {\it Reinforcement};
    \node (a3) [%
        left= 5mm of c.west, yshift=-4mm]
        {\it Learning};

    \node (b1) [%
        right= 5mm of c.east, yshift=4mm]
        {\it Reward};
    \node (b2) [%
        right= 5mm of c.east, yshift=0mm]
        {\it Design from};
    \node (b3) [%
        right= 5mm of c.east, yshift=-4mm]
        {\it Language};
    
    \end{tikzpicture}
    
    \caption{
    Although prior works have utilized multi-level rewards to train hierarchical agents, the \textit{design} of such rewards remains underexplored and lacks a concrete problem formulation.
    This work bridges this gap by introducing HRDL.} 
    \label{fig:related-work}
\end{figure}
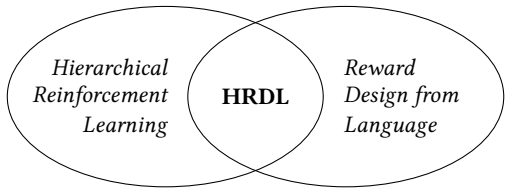%

\subsubsection{Rewards Design for Hierarchical RL.}
While RDP provides a unifying framework that has catalyzed advances in both RL and human-AI alignment, \textit{it does not explicitly consider hierarchical RL}.
Many prior works have explored the use of hierarchical rewards, from early studies in feudal reinforcement learning~\cite{dayan1992feudal} and precursors to the options framework~\cite{singh1992transfer} to more recent advances in deep HRL~\cite{vezhnevets2017feudal,kulkarni2016hierarchical, nachum2018data}.
While these works often \textit{assume access} to hierarchical rewards for training agents to complete tasks, \textbf{the problem of \textit{designing} such hierarchical rewards has received little attention and, to our knowledge, has yet to be formally defined as a concrete research problem}.

A literature search using the keywords ``hierarchical reward design'' primarily returns domain-specific studies that discuss how \textit{using} hierarchical rewards enables solving application-level problems~\cite{huang2023multi, newton2022hierarchical, chen2023hierarchical, naito2025task}.
Other works either employ the term \textit{hierarchy} in different contexts~\cite{sun2019intelligent, li2024bounded, lai2024alarm, jin2024reward}, such as to express the relative importance of multiple flat reward signals~\cite{li2024bounded}.
A related but conceptually distinct line of work utilizes Reward Machines (RMs)~\cite{icarte2022reward, furelos2023hierarchies}, which represent rewards using automata or temporal logic.
RM-based approaches focus on exploiting known reward structures to improve \textit{learning efficiency}, rather than on studying how such structures can be specified from human input.

Since no previous work formally defines the \textit{design of hierarchical rewards} as a general problem, there is a lack of consistent language and theoretical foundation for studying it.
This work addresses this gap at the intersection of hierarchical rewards and reward design (Figure~\ref{fig:related-work}) by formalizing the Hierarchical Reward Design problem and introducing a reward structure that is hierarchical, compact, and capable of capturing nuanced behavioral specifications for both \textit{what} subtasks to select and \textit{how} to execute them.

\textit{Just as RDP lays the groundwork for studying algorithmic reward design in flat settings, we posit that HRD will provide a principled foundation for reasoning about hierarchical reward structures.} In line with the research that originated from RDP, we anticipate that HRD will enable a broad range of problem instantiations (of reward design with different types of human inputs) and solution methods for designing hierarchical rewards.

\subsubsection{Reward Design from Language.}
Early work addressing the RDP focused on designing rewards for intrinsic motivation and reward shaping~\citep{singh2010intrinsically, sorg2010internal, sorg2010reward}.
More recently, research in this area has explored aligning agent behavior more closely with human-provided specifications, using learning or large language models (LLMs) to infer and generate reward functions~\citep{hadfield2017inverse, kwon2023reward}.
In these cases, the human or an oracle, either implicitly or explicitly, serves as the fitness function by evaluating the policy.
However, most existing work on reward design or generation focuses exclusively on non-hierarchical (flat) RL settings, producing reward functions of the form $r_{flat}(s, a)$ or $r_{flat}(s)$ \cite{singh2010intrinsically, sorg2010internal, sorg2010reward, hadfield2017inverse, kwon2023reward, du2023guiding, han2024autoreward, ma2023eureka, li2024auto, xie2023text2reward, bhambri2024extracting, yu2023language}.
While sufficient for certain behaviors, \textit{flat reward functions are fundamentally limited when specifying complex preferences, such as desired subtask sequences or option-conditioned execution strategies, that naturally arise in long-horizon tasks}.

To our knowledge, the only prior work that explicitly considers a hierarchical setting is~\cite{masadome2025reward}, though its focus differs substantially from ours.
Their approach does not formalize the hierarchical reward design problem or analyze the expressivity gap between flat and hierarchical formulations.
Moreover, the rewards generated by their LLMs are limited to task completion objectives and do not capture behavioral specifications.
In contrast, \algo generates both high- and low-level rewards that encode natural language behavioral preferences while preserving task feasibility.%
\footnote{Please see the Appendix for additional details on related works.}

\section{Hierarchical Reward Design}
\label{sec:hrd}

This section formally introduces the Hierarchical Reward Design problem.
We begin by defining the low- and high-level reward functions in HRD and proceed to show that they naturally induce a family of MDPs at the low-level and a semi-MDP (SMDP) at the high-level.
Using these insights, we formally define the general HRD problem along with a specific instantiation, the \textit{Hierarchical Reward Design from Language (HRDL)} problem.\footnote{%
Proofs for all propositions are also provided in the Appendix.}


\subsection{Low‑level and High-level Reward Models}

\begin{definition} [Low-level Reward]
The \textbf{low-level reward} is a function
$r_L: \mathcal{S} \times \mathcal{O} \times \mathcal{A} \rightarrow \mathbb{R}$, which provides feedback for selecting a low-level action $a \in \mathcal{A}$ in state $s \in \mathcal{S}$ while pursuing option $o \in \mathcal{O}$.\label{def: low-level reward}
\end{definition}

Intuitively, $r_L(s, o, a)$ encodes specifications for \textit{how} the agent should execute the subtask associated with option $o$ in state $s$.

\begin{proposition} [Low-level MDP Models]
Let $\mathcal{M}_p = (\mathcal{S}, \mathcal{A}, T, \gamma)$ be a world model, $\mathcal{O}$ a set of options, and $r_L: \mathcal{S} \times \mathcal{O} \times \mathcal{A} \rightarrow \mathbb{R}$ a low‑level reward. 
For a fixed option $o \in \mathcal{O}$, the tuple $\mathcal{M}_{L, o} = (\mathcal{S}, \mathcal{A}, T, r_L(\cdot, o, \cdot), \gamma, h_o)$ defines an MDP, where $h_o$ is the horizon determined by the option’s termination condition $\beta(\cdot, o)$.
\label{prop:low-level}
\end{proposition}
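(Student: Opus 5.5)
The plan is a direct verification against the MDP definition recalled in the Background section, $\mathcal{M} \doteq (\mathcal{S}, \mathcal{A}, T, r, \gamma, h)$: a state space, an action space, Markovian transition dynamics $T(s'|s,a)$, a reward $r:\mathcal{S}\times\mathcal{A}\to\mathbb{R}$, a discount $\gamma\in[0,1]$, and a horizon $h$. Fix $o\in\mathcal{O}$. The first four components of $\mathcal{M}_{L,o}$ come straight from the world model $\mathcal{M}_p$: the spaces $\mathcal{S},\mathcal{A}$, the kernel $T$, and $\gamma\in[0,1]$. These are unchanged, so they satisfy their requirements. The two remaining items are the reward and the horizon.

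\textbf{Reward.} Define $r_{L,o}(s,a) \doteq r_L(s,o,a)$, the section of $r_L$ at the fixed option $o$. Since $r_L:\mathcal{S}\times\mathcal{O}\times\mathcal{A}\to\mathbb{R}$ (Definition~\ref{def: low-level reward}), $r_{L,o}$ is a well-defined map $\mathcal{S}\times\mathcal{A}\to\mathbb{R}$. It depends only on the current state and action, because $o$ is a constant and not a state variable. So $r_{L,o}$ is a valid MDP reward. The Markov property of $\mathcal{M}_{L,o}$ then follows because both $T(\cdot|s,a)$ and $r_{L,o}(s,a)$ depend on history only through $(s,a)$. In particular, the option does not need to be added to the state, since it stays fixed throughout.

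\textbf{Horizon (main obstacle).} The termination condition $\beta(\cdot,o)$ is state-dependent, so $h_o$ is in general a random stopping time rather than a fixed integer. I would handle this in the standard way: treat termination as absorption. Augment $\mathcal{S}$ with an absorbing terminal state $s_\perp$. From any $s$, the next state is $s'\sim T(\cdot|s,a)$, and with probability $\beta(s',o)$ the process is redirected to $s_\perp$. Set zero reward at $s_\perp$. The augmented transition kernel still depends only on $(s,a)$, because $o$ is fixed, so the process stays Markov. Under this construction $h_o$ is the (random) hitting time of $s_\perp$, which can be truncated at the global horizon $h$ if one is given. This is exactly an episodic MDP whose horizon is determined by $\beta(\cdot,o)$.

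To conclude, I would check that the expected discounted return $\mathbb{E}[\sum_{t=0}^{h_o}\gamma^t r_L(s_t,o,a_t)]$ is well defined. It is, if $\gamma<1$ and rewards are bounded, or if termination is almost surely finite, which is the usual assumption on options. Standard RL algorithms therefore apply to $\mathcal{M}_{L,o}$ for each $o$. Ranging over $o\in\mathcal{O}$ gives the family of low-level MDPs. The only delicate point is making the stochastic horizon rigorous; everything else is bookkeeping.
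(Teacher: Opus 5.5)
Your proof is correct and matches the paper's: the paper's argument is exactly your first two steps, namely that $\mathcal{S}$, $\mathcal{A}$, and the Markovian $T$ are inherited from the world model, and that fixing $o$ makes $r_{L,o}(s,a) = r_L(s,o,a)$ a Markov reward on $\mathcal{S}\times\mathcal{A}$. Your absorbing-state treatment of the $\beta$-determined horizon $h_o$ is sound extra rigor that the paper omits (it simply takes $h_o$ as given), with one small caveat: almost-sure termination alone does not guarantee a finite expected return when $\gamma = 1$, which is the low-level value used in the experiments, so rely instead on the truncation at $h$ or on bounded rewards with $\mathbb{E}[h_o] < \infty$.
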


\begin{definition} [High-level Reward]
The \textbf{high-level reward} is a function
$r_H: \mathcal{O} \times \mathcal{S} \times \mathcal{O} \rightarrow \mathbb{R}$, which specifies the expected reward for executing option $o \in \mathcal{O}$ until termination, given that $o$ is initiated in state $s \in \mathcal{S}$ and the previous option was $o^- \in \mathcal{O}$.
\label{def: high-level reward}
\end{definition}

The high-level reward $r_H(o^-, s, o)$ encodes specifications for \textit{what} subtask should be executed, possibly conditioned on both the current state and prior option.
This allows for expressing preferences over subtask \textit{ordering} and dependencies between subtasks.

\begin{proposition}[High-level SMDP Model] \leavevmode\par
\noindent Let $\mathcal{M}_p= (\mathcal{S}, \mathcal{A}, T , \gamma, h)$ be a world model, $\mathcal{O}$ a set of options, and $r_H : \mathcal{O} \times \mathcal{S} \times \mathcal{O} \rightarrow \mathbb{R}$ the high‑level reward. Then, $\mathcal{M}_H = (\mathcal{O} \times \mathcal{S}, \mathcal{O}, T_H, r_H, \gamma, h)$ forms a semi-MDP, where $T_H: \mathcal{O} \times \mathcal{S} \times \mathcal{O} \rightarrow \Delta(\mathcal{O} \times \mathcal{S} \times \mathbb{N})$ defines the joint distribution over the next augmented state and transit time, where $\mathbb{N}$ is the set of natural numbers.
\label{prop:high-level-smdp}
\end{proposition}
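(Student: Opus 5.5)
The plan is to verify the semi-MDP axioms directly, in the spirit of Sutton et al.'s result that a fixed set of options over an MDP induces an SMDP. The high-level process is built from the world model $\mathcal{M}_p$, the option set $\mathcal{O}$ with fixed intra-option policies $\pi_L(a\mid s,o)$ and terminations $\beta(o,\cdot)$, and the high-level reward $r_H$.

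\textbf{Step 1: define the process.} The augmented state is $x_j=(o^-_j,s_j)\in\mathcal{O}\times\mathcal{S}$. It records the previously executed option and the state at the $j$-th decision epoch. The high-level action is the chosen option $o_j\in\mathcal{O}$.

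\textbf{Step 2: construct $T_H$.} Fix $(o^-,s,o)$ and run $o$ from $s$ at time $t$, which is the event $\mathcal{E}(o,s,t)$. Actions are drawn from $\pi_L(\cdot\mid s_{t+i},o)$ and states from $T$. At each step the option terminates with probability $\beta(o,s_{t+i})$. Let $k$ be the termination time. Define
\[
T_H(o',s',k\mid o^-,s,o)=\mathbf{1}[o'=o]\,\Pr\bigl(s_{t+k}=s',\,\tau=k\mid \mathcal{E}(o,s,t)\bigr),
\]
so the next previous-option component is deterministically the option just executed. I will show this probability is well defined and independent of $t$ and of the history before epoch $j$. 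This follows by writing it as a sum over intermediate trajectories $(a_t,s_{t+1},\dots,s_{t+k})$ of products of $\pi_L$, $T$, $(1-\beta)$ factors and a final $\beta$. Each factor depends only on the current state and $o$, so the Markov property of $\mathcal{M}_p$ and the memorylessness of $\pi_L$ and $\beta$ carry over. Summing over $s'$ and $k$ gives total mass $\Pr(\tau<\infty)$. I will assume, as is standard, that options terminate almost surely, or are truncated at the horizon $h$, so that $T_H(\cdot\mid o^-,s,o)\in\Delta(\mathcal{O}\times\mathcal{S}\times\mathbb{N})$.

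\textbf{Step 3: the reward and discounting.} By Definition~\ref{def: high-level reward}, $r_H(o^-,s,o)$ is a function of the current augmented state and action only. This matches the SMDP requirement that the reward for a decision depends only on the decision state and action. Any option-level reward of the form \eqref{eq:taskr-until-termination} fits this requirement, as does one designed directly. Discounting across a transit of length $k$ uses $\gamma^k$, and the horizon $h$ bounds the cumulative elapsed time. With these pieces, $\mathcal{M}_H=(\mathcal{O}\times\mathcal{S},\mathcal{O},T_H,r_H,\gamma,h)$ satisfies the SMDP definition, and its value satisfies the usual SMDP Bellman recursion with $\gamma^k$ factors.

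\textbf{Main obstacle.} The step needing the most care is the Markov property in Step 2. The joint law of the next augmented state and the sojourn time must depend only on $(o^-,s,o)$. The subtlety is that $\beta$ and $\pi_L$ must be Markov in $(s,o)$ and must not depend on the history within the option; I will state this as an assumption on the options. A second point is the treatment of the horizon: if the option has not terminated by $h$, I will truncate it there, which keeps $T_H$ a proper distribution.
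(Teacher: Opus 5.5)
Your proposal is correct and follows essentially the same route as the paper's proof: augmented state $(o^-,s)$, options as actions, reward $r_H(o^-,s,o)$, and $T_H$ built as a sum over within-option trajectories of $\pi_L$--$T$ products, $(1-\beta)$ non-termination factors and a final $\beta(s',o)$, with the next previous-option component set to the executed option $o$. Your handling of normalization (almost-sure termination) and of the Markov assumptions on $\pi_L$ and $\beta$ goes beyond what the paper states explicitly; one caveat is that truncating at the \emph{global} horizon $h$ would make $T_H$ depend on the elapsed time $t$, so to keep your independence-of-$t$ claim, truncate relative to the option's start or fold time into the state.
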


Alternatively, the high-level process can be modeled as a standard MDP when \textit{single-step} high-level rewards are used.
This flexibility highlights that the HRD framework is compatible with both semi-MDP and MDP formulations, allowing the use of a wide range of existing RL algorithms.
We provide the formal MDP definition and corresponding proofs in the Appendix.

\subsection{The HRD Problem}
\label{sec:hrd-problem}
\begin{definition} [Hierarchical Reward Design (HRD)]
The \textbf{Hierarchical Reward Design (HRD)} problem is defined by the tuple $P = (\mathcal{M}_p, \mathcal{O}, \mathcal{R}, \mathcal{A}_{\mathcal{M}_p}, F)$, where 
\begin{itemize}
    \item $\mathcal{M}_p = (\mathcal{S}, \mathcal{A}, T, \gamma, h)$ is the world model;
    \item $\mathcal{O}$ is a finite option set; 
    \item $\mathcal{R} = \mathcal{R}_H \times \mathcal{R}_L$ is the space of candidate reward structures, where $\mathcal{R}_H = \{r_H: \mathcal{O} \times \mathcal{S} \times \mathcal{O} \rightarrow \mathbb{R}\}$ and $\mathcal{R}_L = \{r_L: \mathcal{S} \times \mathcal{O} \times \mathcal{A} \rightarrow \mathbb{R}\}$;
    \item the learning routine $\mathcal{A}_{\mathcal{M}_p}(\cdot) : \mathcal{R} \rightarrow \Pi_H \times \Pi_L$ maps each reward pair $(r_H, r_L)$ to a hierarchical policy $(\pi_H, \pi_L)$, where $\pi_H: \mathcal{O} \times \mathcal{S} \rightarrow \Delta(\mathcal{O})$ optimizes $r_H$ in the high-level decision making model $\mathcal{M}_H$ and $\pi_L: \mathcal{S} \times \mathcal{O} \rightarrow \Delta(\mathcal{A})$ optimizes $r_L$ in each underlying MDP $\mathcal{M}_{L, o}$; and
    \item the fitness function $F: \Pi_H \times \Pi_L \rightarrow \mathbb{R}$ evaluates the quality of hierarchical policies. 
\end{itemize}
The goal of HRD is to find $(r_H^*, r_L^*) = \arg \max_{(r_H, r_L) \in \mathcal{R}} F(\mathcal{A}_{\mathcal{M}_p}(r_H, r_L))$.
\end{definition}

\subsubsection*{Connections to Existing Algorithms.} 
We show in the Appendix that $\mathcal{A}_{\mathcal{M}_p}$ can be instantiated with existing RL algorithms.
In our implementation, the low-level policy $\pi_L(a\mid s,o)$ is trained with PPO~\cite{schulman2017proximal} due to its robustness in control, while the high-level policy $\pi_H(o\mid o^-, s)$ uses DQN-style methods~\cite{mnih2015human}, following common practice in SMDP formulations ~\cite{sutton1999between, bacon2017option}. 
Stronger structural assumptions on $(r_H, r_L)$ can enable the use of more specialized HRL algorithms.
For instance, if the low-level reward depends only on state and action, $r_L(s, o, a)=r_{flat}(s, a)$, and the high-level reward $r_H$ is a single-step reward constructed as $\sum_{a} r_{flat}(s, a) \pi_L(a|s, o)$, the problem reduces to the two augmented MDPs formulation introduced in \cite{zhang2019dac}.
In these cases, algorithms such as double actor-critic \cite{zhang2019dac} and option-critic \cite{bacon2017option} can be applied to learn hierarchical policies.
A detailed exploration of the connections between structural reward assumptions and the applicability of existing HRL algorithms for instantiating $\mathcal{A}_{\mathcal{M}_p}$ is left for future work.

\subsection{Hierarchical Reward Design from Language}
\label{sec:hrdl-problem}
As discussed in Sec.~\ref{sec:intro}, real-world deployments often require agents to satisfy additional behavioral specifications beyond task completion.
In these cases, the task reward can typically be defined once and reused across different behavioral contexts.
In contrast, additional rewards must be redesigned for each new behavior specification.
While the cost of task reward design is amortized, the cost of designing rewards that match human specifications grows linearly with the number of distinct behaviors desired.
This motivates the need for an automated approach to generate rewards to encode behavioral specifications while reusing the existing domain dynamics and task objectives.
The challenge of this problem is twofold:
(1) The generated rewards should have distinct functional forms -- one guiding high-level option selection, and another governing low-level action execution.
(2) The generated rewards must remain compatible with existing task rewards, ensuring that agents continue to achieve the original task objectives.
We formally define this as a specific instantiation of the HRD problem.

\begin{definition}[Hierarchical Reward Design from Language (HRDL)]
The \textbf{HRDL} problem is an instance of the HRD problem with additional inputs: (1) a task reward function $r: \mathcal{S} \times \mathcal{A} \rightarrow \mathbb{R}$, (2) a subtask completion reward (pseudo-reward) $r_p: \mathcal{S} \times \mathcal{O} \times \mathcal{A}\rightarrow \mathbb{R}$, and (3) behavior specifications $l \in \Sigma^*$, provided as a natural language description.
$l$ guides the reward generation during training, and the fitness function $F$ is accessible \textbf{only} during evaluation. 
The objective of HRDL is to generate high- and low-level designed rewards, $R^*=(\tilde{r}_H^*, \tilde{r}_L^*) \in \mathcal{R}$, such that the resulting hierarchical policy $(\pi_H^*, \pi_L^*)$, trained under the composite rewards $(r_{opt}+\tilde{r}_H^*, r_p+\tilde{r}_L^*)$ using $\mathcal{A}_{\mathcal{M}_p}$, maximizes the fitness score:
$(\tilde{r}_H^*, \tilde{r}_L^*) = \arg \max_{(\tilde{r}_H, \tilde{r}_L) \in \mathcal{R}} F(\mathcal{A}_{\mathcal{M}_p}(r_{opt}+\tilde{r}_H, r_p+\tilde{r}_L))$.
\end{definition}

If a \textit{non-hierarchical} reward design method is used, the designed reward has the flat form $\tilde{r}_{flat}(s, a)$.
To integrate this flat reward into the hierarchical setting, we must decompose it into high- and low-level rewards:
\begin{align}
    r_L(s, o, a) &= r_p(s, o, a) + \tilde{r}_{flat}(s, a) \\
    r_H(s, o) &= r_{opt}(s, o) + \tilde{r}_{flat, H}(s, o)
\end{align}
where $\tilde{r}_{flat, H}(s, o)$ aggregates $\tilde{r}_{flat}(s, a)$ using the same expression as Eq.~\ref{eq:taskr-until-termination}.
While flat designed rewards $\tilde{r}_{flat}(s, a)$ can encode some behavior specifications, the definitions of high- and low-level rewards in HRD provide a significantly more expressive mechanism for specifying agent behavior:
\begin{align}
    r_L(s, o, a) &= r_p(s, o, a) + \tilde{r}_L(s, o, a) \label{eq:flat-low} \\
    r_H(o^-, s, o) &= r_{opt}(s, o) + \tilde{r}_H(o^-, s, o) \label{eq:flat-high}
\end{align}
In fact, the flat reward is a special case of hierarchically designed rewards, where $\tilde{r}_L(s, o, a) = \tilde{r}_{flat}(s, a)$ and $\tilde{r}_H(o^-, s, o) = \tilde{r}_{flat, H}(s, o)$. 
The hierarchical formulation is strictly more general than the flat formulation, offering greater expressiveness in the following ways: 

\begin{property}
    Certain specifications on \textit{sub-task selection} can be expressed through $\tilde{r}_H(s, o^-, o)$, but they cannot be expressed by flat function: $\tilde{r}_{flat}(s, a)$. 
\end{property}

\begin{property}
    Certain specifications on \textit{sub-task execution} can be expressed through $\tilde{r}_L(s, o, a)$, but they cannot be expressed by flat function: $\tilde{r}_{flat}(s, a)$. 
\end{property}

Proofs of these properties are provided in the Appendix.
In the following sections, we introduce an algorithm for generating hierarchical rewards $(\tilde{r}_H, \tilde{r}_L)$ from natural language specifications and empirically compare its performance against flat reward design that generates alignment rewards of the form $\tilde{r}_{flat}(s, a)$.

\section{\algofullname~(\algo)}
\label{sec:method}
We now present \algo, an algorithm designed to solve HRDL by leveraging large language models (LLMs).
L2HR represents reward functions as programs, which it generates directly from natural language specifications.
Specifically, it first employs a structured prompting strategy to capture specifications as  prompts, which are then used as inputs to a training module that produces hierarchical reward functions.
An illustration of \algo's input and output is provided in Fig.~\ref{fig:l2hr-example}, with more details available in the Appendix.

\subsection{Prompting Strategy}
To generate executable reward functions from natural language, \algo requires the specifications to be provided as a structured prompt.
We design a prompting strategy for \algo inspired by Eureka, a recent approach for LLM-based reward generation~\citep{ma2023eureka}.
However, unlike Eureka, \algo's prompt design focuses on producing feasible reward functions without relying on access to a fitness $F$ during reward code generation.
Specifically, within \algo, the LLM is provided with a structured prompt comprising:
\begin{enumerate}
    \item \textit{Task Description}: 
    A natural language description of the overall task objective, including the approximate scale of the task reward. 
    Since RL is sensitive to reward magnitudes, this information helps ensure that the generated preference rewards are neither too small to influence learning nor so large as to destabilize training.
    The task reward code itself is intentionally withheld to reflect realistic settings in which only the reward signal (but not the code) is accessible and to prevent overfitting to implementation details.
    \item \textit{Environment Code Context}: 
    Snippets of environment source code that expose the state and action spaces without revealing simulation internals, following the methodology of~\citep{ma2023eureka}. 
    \item \textit{Relevant Action-Related Spaces}: Descriptions of the option space $\mathcal{O}$ and action space $\mathcal{A}$, including the semantic role of each. We include these descriptions to help the LLM correctly distinguish between high-level and low-level decision spaces. 
    \item \textit{Behavior Specification}:  A natural language string that describes the desired agent behaviors beyond task completion. 
    \item \textit{Formatting and Reward Design Tips}: 
    Coding constraints (e.g., avoiding global variables) and guidance on balancing designed rewards with the underlying task reward.
\end{enumerate}

\subsection{Training Procedure}
Given the structured prompt, state-of-the-art LLMs can generate plausible reward code in a zero-shot manner.
However, we find that zero-shot generation often yields reward code with syntax errors and invalid variable references in practice.
To mitigate these issues, \algo incorporates a training procedure for reward code generation that couples LLMs with RL.
Specifically, \algo generates $k$ reward candidates independently from the LLM and apply a lightweight filtering process to ensure validity.
During filtering, \algo verifies whether the code compiles without syntax errors, and whether it references only permitted state, option, and action variables exposed in the environment prompt.
We find that at least one sample in the batch passes these checks.
As a result, we forego more complex iterative refinement strategies, such as evolutionary search or reward reflection \citep{ma2023eureka}, leaving their integration into \algo as a direction for future work.

\begin{figure}[t]
    \centering
    \begin{subfigure}[b]{\columnwidth}
        \begin{tcolorbox}[label={box:l2hr_prompt}]
\footnotesize
\textbf{Task description:} The objective is to pick up all apples and eggs on the dining table and place them in the sink... \\
\textbf{Environment context:}

\begin{minted}[fontsize=\scriptsize]{python}
'''
Background: PnP_LL_Actions = [...], PnP_HL_Actions = [...] ...
'''
class ThorPickPlaceEnv(gym.Env):
    def __init__(...): ...
\end{minted}
\smallskip
\textbf{Relevant task spaces:} The agent's option/subtask space consists of picking up and placing the two types of objects... \\
\textbf{Low-level user preference:} The agent should avoid the stool while on its way to pick up or drop an egg... \\
\textbf{High-level user preference:} The agent should pick up an object type that is different from what was previously picked... \\
\textbf{Additional info:} Do not use function attributes or global variables...
        \end{tcolorbox}
        \caption{Natural Language Specifications}
        \label{fig:prompt}
    \end{subfigure} \\
    \vspace{1em}
    \begin{subfigure}[b]{\columnwidth}
        \centering
        \begin{tcolorbox}[label={box:l2hr_output}]
\footnotesize
\begin{minted}[fontsize=\scriptsize]{python}
# High-level preference reward
def get_high_level_pref_gpt(state: Dict, prev_option: int, option: int) -> 
    Tuple[float, Dict[str, float]]:
    ...
    return reward, reward_components

# Low-level preference reward
def get_low_level_pref_gpt(state: Dict, option: int, action: int) -> 
    Tuple[float, Dict[str, float]]:
    ...
    return reward, reward_components
\end{minted}
        
        \end{tcolorbox}
        \caption{Reward Functions designed by L2HR}
        \label{fig:reward}
    \end{subfigure}
    \caption{
    Illustration of L2HR input and output.}
    \label{fig:l2hr-example}
\end{figure}

The full two-stage training procedure of \textbf{\algo} is provided in the Appendix.
In the first stage, \algo uses the LLM to generate $k$ candidate low-level alignment functions $\tilde{r}_L^{(1)}, \dots, \tilde{r}_L^{(k)}$ from the specification $l$. 
The low-level prompt differs from the high-level one only in its description of the action space and level-specific behavior specifications. 
Each $\tilde{r}_L^{(i)}$ is then used to train a corresponding low-level policy $\pi_L^{(i)}$ with the combined objective of pseudo-rewards plus the LLM-generated $\tilde{r}_L^{(i)}$. 
Only the policies $\pi_L^{(i)}$ that surpass a predefined threshold of subgoal completion, based on cumulative pseudo-rewards, are considered as \textit{valid}.

In the second stage, \algo generates $k$ candidate high-level alignment functions $\tilde{r}_H^{(1)}, \dots, \tilde{r}_H^{(k)}$ and uses them to train high-level policies $\pi_H^{(j)}$, each conditioned on a \textit{valid} $\pi_L^{(i)}$ from the first stage. 
These policies are optimized using with the combined objective of option-level task reward plus the LLM-generated $\tilde{r}_H^{(j)}$.
Then, \algo returns all designed rewards $(\tilde{r}_H^{(j)}, \tilde{r}_L^{(i)})$ and corresponding trained policy pairs $(\pi_H^{(j)}, \pi_L^{(i)})$ that achieve cumulative task rewards above a threshold.
This two-stage structure promotes modularity and allows for selective reuse of subtask policies.
\section{Experiments}
\label{sec:experiments}

We empirically evaluate whether framing reward generation as a hierarchical problem offers advantages over the traditional flat (non-hierarchical) formulation. 
Specifically, we aim to evaluate:
\begin{enumerate}[label=\textbf{Q\arabic*}.]
    \item Are the generated alignment rewards syntactically correct?
    \item Do they preserve task feasibility?
    \item Do they successfully induce behaviors that match the provided specifications?
\end{enumerate}
To investigate \textbf{Q3}, we additionally conduct user studies.
The Appendix includes further experimental details, including 
more information about the domains, metrics, LLM prompts, hyperparameters, and user study protocol.

\subsection{Baselines}
To evaluate L2HR (referred to as \textit{Hier} in the tables), we consider the following baselines. 
For all LLM-based experiments, we generate $k = 8$ reward function candidates per trial using GPT-4o \cite{hurst2024gpt} and repeat this process 3 times, resulting in a total of 24 reward candidates per configuration.

\textbf{Language to Flat Reward (Flat).}
This baseline generates flat alignment reward $\tilde{r}_{flat}(s, a)$ from language and incorporates it into the hierarchical setting using Eq.~\ref{eq:flat-low} and ~\ref{eq:flat-high} to maintain a consistent two-level training framework. 
Prompts are identical to those used for \algo, except that the prompts for flat reward generation exclude option-related instructions (since flat rewards are independent of options) and express user preferences as a single combined description rather than separate high- and low-level specifications.

\textbf{No Reward Design (Task).} 
This baseline does not utilize LLMs and uses only the task reward $r$ and pseudo-reward $r_p$ without incorporating any behavioral specifications (i.e., $\tilde{r}_H = \tilde{r}_L = 0$).
As this setting is less noisy, we run it for 5 trials.

\subsection{Domains}
We evaluate the approaches on three long-horizon, multi-subtask domains: Rescue World, iTHOR, and Kitchen.
\begin{figure}[t]
    \centering
    \begin{subfigure}[b]{0.49\columnwidth}
        \centering
        \includegraphics[width=\textwidth]{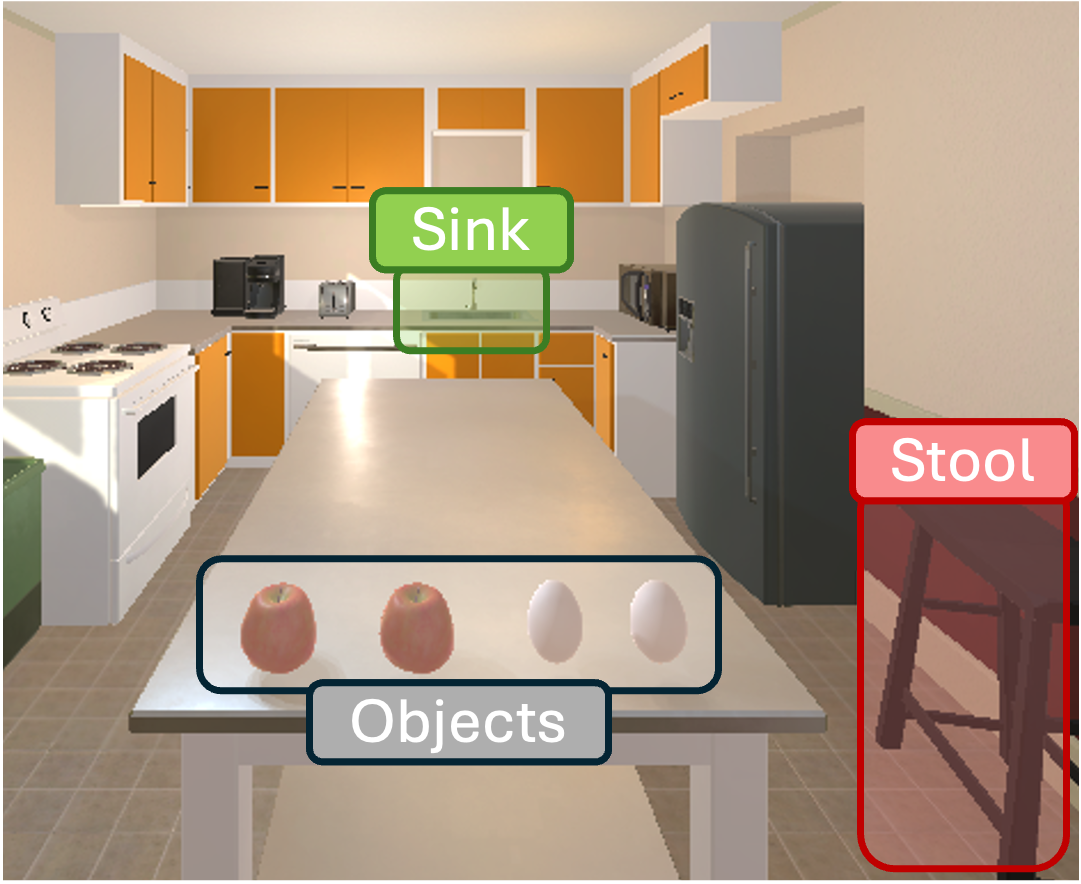} 
        \caption{iTHOR Domain}
        \label{fig:ithor}
    \end{subfigure}
    \hfill
    \begin{subfigure}[b]{0.49\columnwidth}
        \centering
        \includegraphics[width=0.985\textwidth]{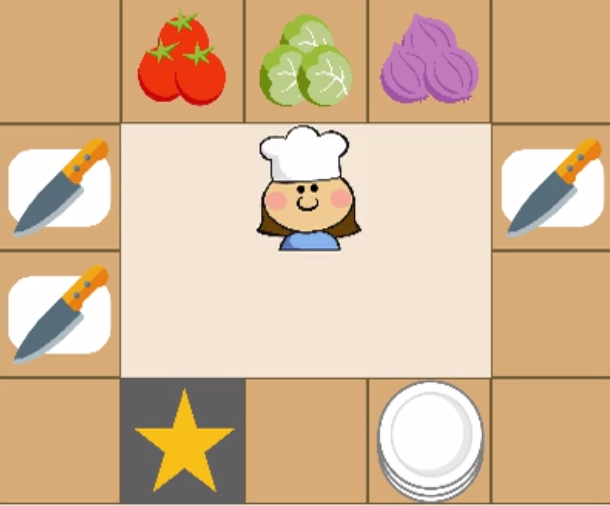}
        \caption{Kitchen Domain}
        \label{fig:kitchen}
    \end{subfigure}
    \caption{Rendered scenes from the iTHOR and Kitchen domains. Fig.~\ref{fig:intro} (right) illustrates the Rescue World domain.}
    \label{fig:domain-viz}
\end{figure}
Each domain poses distinct challenges in subtask sequencing and low-level execution.
Furthermore, as we see later, flat rewards cannot in principle capture all specifications in Rescue World and iTHOR, whereas they can in Kitchen.
This contrast allows us to investigate, \textit{in practice}, how effectively language specifications can be translated into flat and hierarchical rewards across both types of scenarios.
Additional details are provided in the Appendix and Fig.~\ref{fig:domain-viz}.

\textbf{Rescue World.} 
This is a variant of the Rescue World for Teams (RW4T) domain~\cite{orlov2024rw4t}, where the agent must collect and deliver all supplies in the environment.
This domain features a large state space represented by a 407-dimensional vector and poses a long-horizon challenge, requiring the agent to complete 8 subtasks, each lasting up to around 10 steps.
Behavioral specifications include:
(1) a high-level \textit{persistence} specification for delivering all supplies of one type before switching to another and (2) a low-level \textit{safety} specification for avoiding hazardous zones while carrying supplies.

\textbf{iTHOR.}
Built upon the Unity game engine, iTHOR is an environment within the AI2-THOR \cite{kolve2017ai2} framework that features several realistic household scenes in which an agent navigates and interact with everyday objects. 
Here, we focus on a long-horizon pick and place task within a kitchen setting consisting of 8 subtasks, each requiring up to approximately 30 steps to complete. 
The agent must deliver a set of apples and eggs located on the dining table to the sink on the other side of the room. 
The state space is represented by a 30-dimensional vector that contains object and agent positions and object states. 
Behavioral specifications include: (1) a high-level \textit{diversity} specification that requires delivering a different item from the one previously delivered and (2) a low-level \textit{avoidance} specification that prevents the agent from going near a stool placed in the environment while picking up or delivering an egg.

\textbf{Kitchen.} 
This is a single-agent variant of Overcooked, an environment originally developed for studying human-AI collaboration in kitchen tasks~\cite{liu2023llm, wang2025zsc}.
In our setting, the agent needs to prepare a salad with lettuce, tomatoes, and onions.
This domain features an even larger state space, represented by a 699-dimensional continuous vector that captures various ingredient states.
It also involves a long-horizon task requiring the completion of 5 subtasks in a strict sequence, with the final 2 subtasks dependent on the successful completion of all preceding ones.
The high-level behavioral specification is a preferred \textit{chopping} sequence (e.g., tomatoes $\rightarrow$ onions $\rightarrow$ lettuce).
Since the environment uses fixed low-level policies, we skip low-level reward design in this domain.

\subsection{Numerical Experiments}
\label{sec:results}
As a precursor to evaluting solutions to HRDL, we also conducted experiments where hand-crafted flat and hierarchical rewards were used directly to train policies \textit{without requiring reward generation from language specifications}. 
These experiments serve as a proof-of-concept and demonstrate that: 
\begin{itemize}
    \item given expert-specified hierarchical rewards $(\tilde{r}_H^*, \tilde{r}_L^*)$, existing RL algorithms can effectively learn hierarchical policies $(\pi_H^*, \pi_L^*)$ that achieve high task performance and strong alignment with designer specifications; and 
    \item while expert-specified flat rewards $\tilde{r}_{flat}^*$ can capture some behavioral specifications, they fail to express ones that require knowledge of the previous subtask (e.g., the \textit{persistence} specification in Rescue World and the \textit{diversity} specification in iTHOR).
\end{itemize}
We report these preliminary results in the Appendix.
Now, we return to the core HRDL setting, where designed rewards must be synthesized from natural language inputs.

\begin{table*}[t]
\centering
\caption{
Table showing the performance of policies trained with the task reward alone or combined with LLM-generated flat or hierarchical rewards.
For each metric, we report both the cumulative reward returns and the percentage of policies at expert-level alignment (attaining the maximum possible cumulative return for that metric).
Means and standard deviations are computed over all runs for the \textit{Task} baseline, and only over the LLM-generated reward candidates that successfully complete the task for \textit{Flat} and \textit{Hier}.
}
\label{tab:sim-results}
\begin{center} 
\resizebox{0.8\textwidth}{!}{
        \begin{tabular}{llcccccc}
            \toprule
            \multirow{2}{*}{\textbf{Domain}} & 
            \multirow{2}{*}{\textbf{Method}} & 
            \multicolumn{2}{c}{\textbf{High-Level}} & 
            \multicolumn{2}{c}{\textbf{Low-Level}} & 
            \multicolumn{2}{c}{\textbf{Total}} \\
            \cmidrule(lr){3-4}\cmidrule(lr){5-6}\cmidrule(lr){7-8} &  & \textbf{Rewards $\uparrow$} & \textbf{ Expert \% $\uparrow$} & \textbf{Rewards $\uparrow$} & \textbf{Expert \% $\uparrow$} & \textbf{Rewards $\uparrow$} & \textbf{Expert \% $\uparrow$}
            \\ \midrule
            \multirow{3}{*}{Rescue} & Task & 11.22 $\pm$ 5.57 & 20.00 & -16.46 $\pm$ 5.49 & 0.00 & 73.80 $\pm$ 5.70 & 0.00 \\
            & Flat & 9.38 $\pm$ 7.02 & 12.50 & -2.62 $\pm$ 5.19 & 62.50 & 85.13 $\pm$ 9.33 & 12.50 \\
            & Hier & \textbf{16.65 $\pm$ 6.93} & \textbf{76.92} & \textbf{-0.69 $\pm$ 1.58} & \textbf{76.92} & \textbf{93.98 $\pm$ 9.01} & \textbf{69.23} \\
            \midrule
            \multirow{3}{*}{iTHOR} & Task & 4.10 $\pm$ 1.34 & 0.00 & -23.38 $\pm$ 1.86 & 0.00 & 12.31 $\pm$ 0.66 & 0.00 \\
            & Flat & 7.67 $\pm$ 4.48 & 0.00 & -35.20 $\pm$ 12.42 & 0.00 & 3.27 $\pm$ 9.31 & 0.00 \\
            & Hier & \textbf{14.19 $\pm$ 2.23} & \textbf{87.50} & \textbf{-3.75 $\pm$ 8.14} & \textbf{75.00} & \textbf{37.68 $\pm$ 6.68} & \textbf{62.50} \\
            \midrule
            \multirow{3}{*}{Kitchen} & Task & 0.00 $\pm$ 0.00 & 0.00 & -- & -- & 0.75 $\pm$ 0.00 & 0.00 \\
            & Flat & 0.06 $\pm$ 0.13 & 10.00 & -- & -- & 0.80 $\pm$ 0.12 & 10.00 \\
            & Hier & \textbf{0.39 $\pm$ 0.05} & \textbf{92.86} & -- & -- & \textbf{1.08 $\pm$ 0.05} & \textbf{92.86} \\
            \bottomrule
        \end{tabular} 
        } 
    \end{center}
\end{table*}

\textbf{Q1}. \textbf{Are the designed rewards syntactically correct?}
Figure~\ref{fig:syntax-error} shows that hierarchical reward generation achieves substantially lower code generation error rates than flat reward generation in all three domains, \textit{suggesting that formulating the HRL reward design problem hierarchically can simplify reward synthesis for LLMs}. 
In Rescue World and iTHOR, flat rewards cannot capture the high-level specifications, as doing so requires access to the previously executed option. 
As a result, the LLM often hallucinates unavailable variables related to previously delivered object types.
In Kitchen, higher error rates stem from the complexity of reasoning over low-level actions. 
For example, correctly checking if the agent is chopping an onion on the low-level requires inspecting coordinate-level state variables.
In contrast, having access to the options space in HRD enables direct reasoning over high-level behaviors (e.g., \texttt{chop onion}), greatly simplifying reward generation.

\begin{figure}
  \centering
  \includegraphics[width=0.73\linewidth]{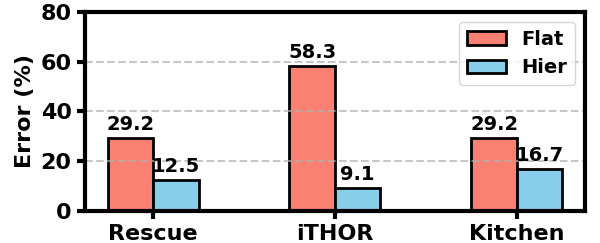}
  \caption{Syntax error rates for LLM-based reward generation, computed over 24 candidates per configuration.}
  \label{fig:syntax-error}
\end{figure}

\textbf{Q2}. \textbf{Do the designed rewards preserve task feasibility?}
\textit{Hier} consistently better preserves task feasibility than \textit{Flat} across all domains (Figure~\ref{fig:task-comp}), which is essential for real-world deployment.
In Rescue World and iTHOR, flat rewards often rely on spurious heuristics to infer past behavior (e.g. inferring the last delivered item type from the agent's current location), leading to unintended behaviors. 
In iTHOR, flat rewards tend to misidentify the type of objects being picked up, resulting in incorrect reward accumulation.
In Kitchen, flat rewards frequently struggle to reason over low-level actions (e.g., which ingredient the agent is interacting with), producing alignment rewards that interfere with task completion. 
Hierarchical rewards, by contrast, avoid these issues by conditioning on temporally extended options rather than raw state transitions.

\begin{figure}
  \centering
  \includegraphics[width=0.73\linewidth]{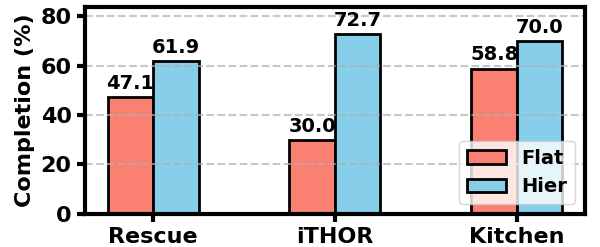}
  \caption{Task completion rates for LLM-generated rewards, calculated as the proportion of designed rewards that preserve task feasibility among syntactically valid candidates.}
  \label{fig:task-comp}
\end{figure}

\textbf{Q3}. \textbf{Do the designed rewards actually lead to agent behavior that match the behavioral specifications?}
Table~\ref{tab:sim-results} summarizes alignment performance using handcrafted high- and low-level ground-truth rewards $(\tilde{r}_H, \tilde{r}_L)$. 
The \textit{Total} metric combines task and alignment rewards and serves as a proxy for the overall fitness $F$.
In Rescue World, \textit{Hier} substantially outperforms both \textit{Task} and \textit{Flat} baselines on high-level alignment, achieving expert-level performance in 76.92\% of successful runs. 
This reflects its ability to encode the \textit{persistence} specification, which flat rewards fundamentally cannot represent. 
While \textit{Flat} occasionally (12.50\%) attains expert-level high-level alignment, these instances are coincidental.
Both methods perform comparably on low-level alignment, as the agent’s carrying status can be inferred from observable states without explicit option conditioning.
Overall, \textit{Hier} achieves the highest total return, with 69.23\% of policies attaining expert-level alignment.

In iTHOR, \textit{Hier} again outperforms \textit{Flat} on high-level alignment, as the \textit{diversity} specification cannot be fully captured by the flat reward without access to the agent's current option.
In this case, \textit{Hier} also significantly outperforms \textit{Flat} on low-level alignment as well.
Although the LLM is explicitly asked to penalize the agent's proximity to the stool when \textit{on its way} to pick up or deliver, the generated flat rewards only apply the penalty in the timestep that the agent is specifically performing the pick or place action, leading the agent to not avoid the stool. 
This makes sense, as it is difficult to discern the agent's intent in picking up or dropping an egg from just the state without option information.

In Kitchen, \textit{Hier} achieves substantially higher alignment with the chopping specification (92.86\% vs. 10.00\%). 
Although flat rewards are theoretically capable of encoding this behavior, doing so requires complex and brittle logic: only 1 flat reward candidate successfully implemented the specification.
This demonstrates a key advantage of designing rewards for HRL with HRD: even when flat rewards are theoretically sufficient, hierarchical rewards can simplify reward design through high-level abstractions and lead to better alignment with behavioral specifications.
Example videos of policies for all domains are provided in the supplementary material.

\subsection{Evaluations with Human Participants}
In real-world applications, manually designed ground-truth rewards are rarely available. 
To better reflect practical deployment scenarios, we conducted an IRB-approved user study on Rescue World and Kitchen using human participants recruited via Prolific.
The goal was to assess whether \textit{Hier} agents are perceived as better aligned with behavioral specifications than \textit{Flat} agents.

In this study, non-expert participants effectively served the role of fitness function $F$, providing human-centered evaluations.
Participants viewed videos of agent behaviors produced by both methods and rated their alignment with textual specifications on a scale from 1 (least aligned) to 5 (most aligned), similar to the scale employed in the evaluation methodology of~\cite{kwon2023reward}.
Participants were not aware of the underlying reward design methods of the policies.
We collected usable responses (e.g., those that passed attention checks) from 30 participants, evenly split across the two domains.
Further details of the study design are provided in the Appendix.

\begin{figure}
  \centering
  \includegraphics[width=0.93\linewidth]{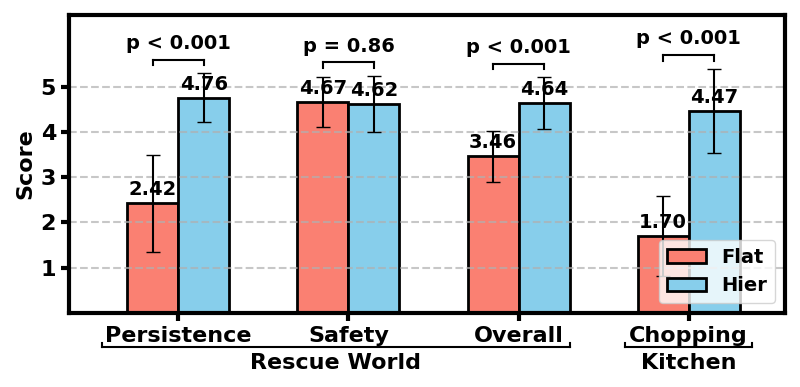}
  \caption{Human-provided ratings for agent alignment.}
  \label{fig:human-ratings}  
\end{figure}

As shown in Fig.~\ref{fig:human-ratings}, participants consistently rated \textit{Hier} agents higher than \textit{Flat} for high-level alignment. 
In Rescue World, \textit{Hier} significantly outperforms \textit{Flat} on the \textit{persistence} preference (4.76 vs. 2.42), a specification that flat rewards struggle to capture due to the lack of previous option information.
While both methods have similar ratings for the low-level \textit{safety} specification, \textit{Hier} achieves significantly higher \textit{overall} alignment scores (4.64 vs. 3.46).

In Kitchen, the advantage of \textit{Hier} is even more pronounced for the \textit{chopping} specification (4.47 vs. 1.70).
This larger gap arises because, unlike in Rescue World, the preferred behavior in Kitchen rarely occurs accidentally, 
as aligning with the \textit{chopping} specification requires taking additional steps in the environment. 
Notably, across both domains, \textit{Hier} policies consistently receive average ratings above 4, indicating a strong perception of alignment.
These findings suggest that, in practice, when task completion is used to filter out unsuccessful policies, the remaining \textit{Hier} candidates are consistently well-aligned with behavioral specifications. 

\begin{table}
  \centering
        \caption{Candidate agent policies (\%) that received a perfect alignment score from all human participants.}
        \label{tab:perfect-ratings}
        \resizebox{0.47\textwidth}{!}{
        \begin{tabular}{lcccc}
        \toprule
           & \multicolumn{3}{c}{\textbf{Rescue World}} & \textbf{Kitchen}\\
        \cmidrule(lr){2-4}\cmidrule(lr){5-5}
        \textbf{Method} & \textbf{Persistence $\uparrow$} & \textbf{Safety $\uparrow$} &
        \textbf{Overall $\uparrow$} & \textbf{Chopping $\uparrow$}\\
        \midrule
        Flat & 12.50\% & 50.00\% & 12.50\% & 10.00\%\\
        Hier & \textbf{76.92\%} & \textbf{61.54\%} &
               \textbf{53.85\%}  & \textbf{71.43\%}\\
        \bottomrule
      \end{tabular}
      }  
\end{table}
Table~\ref{tab:perfect-ratings} shows that over half of the policies produced by \textit{Hier} achieve perfect human ratings across all behavioral specifications, substantially outperforming those generated by \textit{Flat}.
Overall, \textit{Hier} consistently outperforms \textit{Flat} in capturing language-based behavioral specifications across domains in both simulated evaluations and user studies.
\section{Conclusion}
\label{sec:conclusion}
This paper introduces the \textbf{Hierarchical Reward Design (HRD) problem}, which (1) formulates a more expressive reward structure than flat rewards, (2) integrates seamlessly with existing decision-making frameworks and RL algorithms, and (3) better encodes behavioral specifications for long-horizon tasks, with an initial solution to this problem (\textbf{L2HR}) achieving considerably better or comparable results in both numerical and human evaluations.

While our results provide strong motivation for HRD, several limitations and interesting areas for future investigation remain.
First, our experiments focus on complex but simulated domains; evaluating HRD in real-world settings, such as robotics and interactive AI systems, is an important next step.
Second, more advanced reward generation techniques, including evolutionary optimization or human feedback, may further improve performance.
Finally, since option discovery remains a core challenge in Hierarchical RL, jointly addressing option discovery and reward design could substantially enhance practical applicability.

Finally, \textit{we emphasize that human-agent alignment is inherently challenging and HRD should not be viewed in isolation}.  
Rather, it complements a broader ecosystem of alignment approaches, including learning from demonstrations, rankings, and user corrections. 
Understanding how HRDL and L2HR interact with these approaches is an important avenue for future work, bringing us closer to AI agents that are reliably aligned with human needs, values and objectives.
\begin{acks}
We thank the anonymous reviewers and the members of the Ken Kennedy Institute at Rice University for their constructive feedback. This research was supported in part by NSF award $\#2205454$ and Rice University funds.
\end{acks}

\balance
\bibliographystyle{ACM-Reference-Format} 
\bibliography{refs}


\ifarxiv
\appendix
\onecolumn
\begin{center}
    \Huge \bf Appendix
\end{center}
\part*{}
\etocsetlocaltop.toc{part}
\etocsetnexttocdepth{subsection} 
\etocsettocstyle{\section*{Table of Contents}}{}
\localtableofcontents
\newpage
\section{Further Discussion of Related Work}
\label{app:related}

\textbf{Reward Machines (RMs)}
Reward Machines (RMs)~\cite{icarte2022reward} provide a structured representation of reward functions. 
However, their primary objective differs fundamentally from that of Reward Design and, consequently, HRD.
RMs aim to help RL agents exploit its reward structure to improve sample efficiency during learning. 
In contrast, HRD focuses on producing hierarchical reward structures that maximize policy fitness, which in our case is measured by the policy’s alignment with behavioral specifications.
In addition to the different goals, the underlying representations are also distinct.
RMs rely on temporal logic formulas to define structured rewards, while HRD extends the original Reward Design Problem~\cite{singh2009rewards} and defines hierarchical reward functions inspired by established HRL frameworks (e.g., options and feudal hierarchies).
Building on RMs, \cite{furelos2023hierarchies} introduced Hierarchies of Reward Machines (HRMs) to increase the expressivity of RMs and further accelerate policy convergence in some cases.
Again, while HRMs aim to improve learning efficiency, HRD focuses on enabling the capture of complex behavioral specifications through a hierarchical reward structure, particularly for long-horizon tasks.
Because RMs or HRMs do not focus specifically on reward design from human input, the ease of specifying RMs or HRMs from human input remains to be investigated. 
In contrast, for the HRD problem using the L2HR algorithm and human evaluations, we have empirically shown that language models can leverage the structure of HRD to generate hierarchical rewards that induce policies well-aligned with complex behavioral specifications.
An interesting future direction is to explore novel approaches that leverage complementary strengths of RMs and HRD to facilitate both sample-efficient and human-aligned AI.

\textbf{LLM-RL Hybrid Agents}
Although our work focuses on hierarchical RL paradigms, there is a growing body of research on hybrid agents that combine large language models (LLMs) for subtask selection with reinforcement learning for subtask execution \citep{ahn2022can, zhang2023bootstrap}.
HRD can help contribute to this line of work in two key ways. 
First, it can provide a formalism for analyzing reward design in hybrid LLM-RL systems. 
Second, it can inform the development of advanced methods that leverage LLMs with hierarchical reward structures, combining the \textit{expressivity} of hierarchical rewards with a \textit{user-friendly} reward design process. \newpage
\section{Theoretical Analysis}
\subsection{Low- and High-Level Decision Models}
\label{app:model}
\subsubsection{Low-Level MDP Models}

\setcounter{proposition}{0}
\begin{proposition}[Low-level MDP Models]
Let $\mathcal{M}_p = (\mathcal{S}, \mathcal{A}, T, \gamma)$ be a world model, $\mathcal{O}$ a set of options, and $r_L: \mathcal{S} \times \mathcal{O} \times \mathcal{A} \rightarrow \mathbb{R}$ a low‑level reward. 
For a fixed option $o \in \mathcal{O}$, the tuple $\mathcal{M}_{L, o} = (\mathcal{S}, \mathcal{A}, T, r_L(\cdot, o, \cdot), \gamma, h_o)$ defines an MDP, where $h_o$ is the horizon determined by the option’s termination condition $\beta(\cdot, o)$.
\end{proposition}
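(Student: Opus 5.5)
The plan is to check directly that $\mathcal{M}_{L,o}$ satisfies each component of the MDP definition in the Background section, namely a tuple $(\mathcal{S}, \mathcal{A}, T, r, \gamma, h)$ with a Markovian transition kernel and reward. Fix $o \in \mathcal{O}$. The state space $\mathcal{S}$, action space $\mathcal{A}$, transition kernel $T(s' \mid s,a)$ and discount $\gamma \in [0,1]$ are inherited unchanged from the world model $\mathcal{M}_p$. Since $T$ does not depend on $o$, fixing the option has no effect on the dynamics. Consequently, the Markov property of transitions (the next state depends only on $(s,a)$) carries over verbatim.

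The next step is the reward. Define $r_o(s,a) \doteq r_L(s,o,a)$, the partial application of the low-level reward in Definition~\ref{def: low-level reward} at the fixed $o$. Since $r_L : \mathcal{S}\times\mathcal{O}\times\mathcal{A}\to\mathbb{R}$, we have $r_o : \mathcal{S}\times\mathcal{A}\to\mathbb{R}$, which has exactly the type of an MDP reward. Because $o$ is a constant parameter and not part of the state, $r_o$ depends only on the current state and action, so it is Markovian. The expected discounted return $\mathbb{E}[\sum_t \gamma^t r_L(s_t,o,a_t)]$ is therefore well defined for any policy $\pi(a\mid s) = \pi_L(a\mid s,o)$, and this is precisely the objective that $\pi_L$ optimizes in the HRD definition.

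The main obstacle is the horizon $h_o$, since termination under $\beta(\cdot,o)$ is a random stopping time rather than a fixed integer. I would handle this in the standard way for episodic MDPs. Augment $\mathcal{S}$ with an absorbing terminal state $s_\perp$ that yields zero reward, or equivalently fold termination into the kernel: $T_o(s'\mid s,a) = (1-\beta(s',o))\,T(s'\mid s,a)$ for $s'\neq s_\perp$, with the remaining mass $\sum_{s'}\beta(s',o)T(s'\mid s,a)$ sent to $s_\perp$. Because $\beta$ depends only on the current state for fixed $o$, this augmented kernel is still Markovian. $h_o$ is then the (random) hitting time of $s_\perp$, which is bounded by the global horizon $h$ when $h$ is finite.

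Putting these together, $\mathcal{M}_{L,o}$ is a well-formed, possibly episodic, MDP. This also justifies why any standard MDP solver, such as the PPO used in the implementation, can be run per option.
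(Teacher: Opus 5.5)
Your proposal is correct and follows the paper's proof. Both arguments inherit $\mathcal{S}$, $\mathcal{A}$ and the Markovian $T$ from the world model, and define the partially applied reward $r_{L,o}(s,a) = r_L(s,o,a)$, which depends only on $(s,a)$. Your absorbing-state construction for the $\beta$-induced horizon $h_o$ adds rigor the paper omits (it simply lists $h_o$ as a component) and is sound, though the bound of $h_o$ by $h$ comes from truncating episodes at $h$, not from the kernel $T_o$ itself.
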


\begin{proof}
The state space is $\mathcal{S}$ and the action space is $\mathcal{A}$.
The transition function $T$ is the transition function of the world model and is Markovian.
Defining $r_{L,o}(s, a) = r_L(s, o, a)$ produces a Markov reward function that depends only on the state-action pair. 
Thus, the tuple satisfies the four standard components of an MDP.
\end{proof}

\subsubsection{High-Level SMDP Model}

\begin{proposition}[High-level SMDP Model]
Let $\mathcal{M}_p=(\mathcal{S}, \mathcal{A}, T , \gamma, h)$ be a world model, $\mathcal{O}$ a set of options, and $r_H : \mathcal{O} \times \mathcal{S} \times \mathcal{O} \rightarrow \mathbb{R}$ the high‑level reward. Then, $\mathcal{M}_H = (\mathcal{O} \times \mathcal{S}, \mathcal{O}, T_H, r_H, \gamma, h)$ forms a semi-MDP, where $T_H: \mathcal{O} \times \mathcal{S} \times \mathcal{O} \rightarrow \Delta(\mathcal{O} \times \mathcal{S} \times \mathbb{N})$ defines the joint distribution over the next augmented state and transit time, where $\mathbb{N}$ is the set of natural numbers.
\end{proposition}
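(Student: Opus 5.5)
The plan is to check the SMDP axioms directly, mirroring the proof of Proposition~\ref{prop:low-level}. First I need to fix what the high-level process actually is. I would assume the intra-option policy $\pi_L(a\mid s,o)$ and the termination condition $\beta$ are fixed and Markov, and that the high-level policy is the only decision maker. High-level decision epochs are the option-initiation times $t_0 < t_1 < \cdots$. At epoch $t_j$ the augmented state is $x_j = (o_{j-1}, s_{t_j}) \in \mathcal{O}\times\mathcal{S}$, and the decision is the next option $o_j \in \mathcal{O}$. With these choices the state space is $\mathcal{O}\times\mathcal{S}$, the action space is $\mathcal{O}$, and the reward $r_H(o^-,s,o)$ is a function of the current augmented state and action, exactly as an SMDP reward should be. The discount $\gamma$ and horizon $h$ are inherited from the world model; discounting is applied per primitive step, so a transition of duration $k$ is discounted by $\gamma^k$.

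The core step is constructing $T_H$ and showing it depends only on $(o^-,s,o)$. Given that $o$ is initiated in $s$, the low-level trajectory $s_{t}, a_t, s_{t+1},\dots$ is generated by $\pi_L(\cdot\mid\cdot,o)$ and $T$. Termination happens at the first step $k\ge 1$ at which $\beta(o,s_{t+k})$ fires. I would write the probability that the process terminates after exactly $k$ steps in state $s'$ as a sum over intermediate trajectories:
\begin{equation*}
P(s',k\mid s,o)=\sum_{s_1,\dots,s_{k-1}} \prod_{i=0}^{k-1}\Bigl(\textstyle\sum_a \pi_L(a\mid s_i,o)\,T(s_{i+1}\mid s_i,a)\Bigr)\prod_{i=1}^{k-1}\bigl(1-\beta(o,s_i)\bigr)\,\beta(o,s'),
\end{equation*}
with $s_0=s$ and $s_k=s'$. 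For continuous $\mathcal{S}$ the sums over states become integrals. The next augmented state is deterministically $(o,s')$, because the just-executed option becomes the new ``previous option''. So I define
\begin{equation*}
T_H\bigl((o',s'),k \mid (o^-,s),o\bigr)=\mathbb{1}[o'=o]\,P(s',k\mid s,o),
\end{equation*}
which is a distribution on $\mathcal{O}\times\mathcal{S}\times\mathbb{N}$.

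Then I would argue the semi-Markov property. By the strong Markov property of the primitive chain, with stopping times at the initiation epochs, the future after $t_j$ depends on the past only through $s_{t_j}$ and the chosen $o_j$, independently of earlier history. The component $o^-$ of the augmented state does not affect the transition at all. It is carried along only so that $r_H$, and $\pi_H(o\mid o^-,s)$, can condition on it. Together with the reward, this yields the standard SMDP tuple.

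The main obstacle is well-posedness of the duration distribution, namely that $\sum_{s',k}P(s',k\mid s,o)=1$. This requires options to terminate almost surely. I would handle it either by assuming almost-sure termination, as in \cite{sutton1999between}, or by truncating at the horizon $h$ and forcing termination at the final step. The truncation makes $k\le h$ and ensures the transition probabilities sum to one. With a finite horizon, the semi-MDP is then non-stationary, or stationary after time is folded into the state, which is standard; I would note this but not belabor it.
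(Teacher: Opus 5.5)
Your proposal is correct and follows essentially the same route as the paper: the same augmented state space $\mathcal{O}\times\mathcal{S}$, action space $\mathcal{O}$ and reward $r_H$, and the same trajectory-sum construction of $T_H$. That construction has the products of $\sum_a \pi_L T$ terms, the non-termination factors $1-\beta$ at intermediate states and $\beta$ at the final state, with the next augmented state forced to $(o,s')$. Your extra remarks on the semi-Markov property and on normalization (almost-sure termination, or truncation at $h$) are points the paper leaves implicit, so they only tighten the same argument.
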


\begin{proof}
A semi-MDP (SMDP) requires (1) a set of states, (2) a set of actions, (3) for each state-action pair, an expected discounted reward, and (4) a well-defined joint distribution over the next state and transit time \cite{sutton1999between}.

We define the state space as the Cartesian product $\mathcal{O} \times \mathcal{S}$, combining the previous option $o^- \in \mathcal{O}$ and the current environment state $s \in S$.
The action space is the set of options $\mathcal{O}$.
The reward function for each state-action pair $\big((o^-, s), o\big)$ is given by the provided high-level reward $r_H(o^-, s, o)$.

To define the transition dynamics, consider the transition probability from a given augmented state $(o^-, s)$ upon selecting an option $o$.
Let $s'$ denote the state upon option termination and $k$ the number of timesteps to reach $s'$.
The transition function $T_H$ is defined as:
\begin{align}
T_H(o, s', k | o^-, s, o) = \sum_{\tau: \ (o^-, s) \rightarrow (o, s')} \text{Pr}(\tau; \pi_{L, o}) \cdot \mathbb{I}_{\{k = |\tau|\}} \cdot \beta(s', o)
\end{align}
where $\tau$ is any trajectory that starts in state $s$ after option $o^-$ and reaches state $s'$ via option $o$, $\text{Pr}(\tau; \pi_{L, o})$ is the probability of trajectory $\tau$ under policy $\pi_{L, o}$, and $|\tau|$ is the number of timesteps taken by the trajectory.
Given a specific $\tau = (o_{t-1}, s_t, o_t, s_{t+1}, \ ... \ , o_{t+\eta-1}, s_{t+\eta})$ and knowing that $o_{t-1} = o^-$ and $o_t, \ ... \ , o_{t+\eta-1} = o$, the probability of the trajectory is given by:
\begin{align}
\text{Pr}(\tau; \pi_{L, o}) = \prod_{i=0}^{\eta-1} \Big(\sum_a \pi_{L, o} (a|s_{t+i}) T(s_{t+i+1}|s_{t+i}, a) \Big) \cdot \prod_{i=0}^{\eta-2}\Big(1-\beta(s_{t+i+1}, o)\Big)
\end{align}
The first product accounts for the probabilities of transitioning through the intermediate states under $\pi_{L, o}$ and the environment's underlying dynamics $T$.
The second product ensures that the option does not terminate at intermediate states prior to reaching $s'$.
As all four conditions are satisfied, the tuple $\mathcal{M}_H$ forms a valid SMDP.
\end{proof}

\subsubsection{High-Level MDP Model}
Alternatively, the high-level process can be modeled as an MDP if \textit{single-step} high-level rewards $r_H^{step}(o_{t-1}, s_t, o_t)$ are defined. 
\cite{zhang2019dac} was the first to model high-level decision-making as an MDP within the options framework.
While our formulation differs from theirs, we draw inspiration from their use of \textit{single-step} high-level rewards and demonstrate that the high-level process in our setting can also be modeled as an MDP.

\begin{proposition}[High-level MDP Model]
Let $\mathcal{M}_p=(\mathcal{S}, \mathcal{A}, T , \gamma, h)$ be a world model, $\mathcal{O}$ a set of options, and $r_H^{step} : \mathcal{O} \times \mathcal{S} \times \mathcal{O} \rightarrow \mathbb{R}$ a ``single-step'' high‑level reward. 
Then, $\mathcal{M}_H^{step} = (\mathcal{O} \times \mathcal{S}, \mathcal{O}, T_H^{step}, r_H^{step}, \gamma, h)$ forms an MDP, where $T_H^{step}: \mathcal{O} \times \mathcal{S} \times \mathcal{O} \rightarrow \Delta(\mathcal{O} \times \mathcal{S})$.
\end{proposition}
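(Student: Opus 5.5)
The plan is to verify the MDP axioms directly, following the template of the proof of the High-level SMDP Model proposition. The difference is that now each high-level decision epoch lasts exactly one primitive timestep, so no transit-time variable is needed.

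\textbf{Step 1: state and action spaces.} The state space is the augmented space $\mathcal{O}\times\mathcal{S}$, whose elements are pairs $(o_{t-1}, s_t)$ consisting of the option active at the previous step and the current world state. The action space is $\mathcal{O}$. The reward for the state--action pair $\big((o_{t-1},s_t),o_t\big)$ is simply $r_H^{step}(o_{t-1},s_t,o_t)$. This is a function of the current augmented state and action only, so it is Markov by construction.

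\textbf{Step 2: the one-step transition kernel.} Suppose option $o_t$ is selected in augmented state $(o_{t-1},s_t)$. Then exactly one primitive step is taken:
\begin{itemize}
\item a primitive action is drawn from $\pi_L(\cdot\mid s_t,o_t)$;
\item the world moves to $s_{t+1}$ according to $T$;
\item the next augmented state is $(o_t,s_{t+1})$, because the option just executed becomes the ``previous option''.
\end{itemize}
This gives
\begin{align}
T_H^{step}(o', s' \mid o^-, s, o) = \mathbb{I}_{\{o'=o\}} \sum_{a\in\mathcal{A}} \pi_L(a\mid s,o)\, T(s'\mid s,a).
\end{align}
I would then check that this is a valid distribution on $\mathcal{O}\times\mathcal{S}$: it is nonnegative, and summing over $o'$ and $s'$ gives $\sum_a \pi_L(a\mid s,o)=1$. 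I would also check that it depends only on $(o^-,s,o)$, using that $T$ is Markov and that $\pi_L$ is a fixed stationary policy. The kernel in fact does not depend on $o^-$ at all, which is harmless. The discount $\gamma$ and horizon $h$ carry over unchanged from $\mathcal{M}_p$, since each high-level step is one primitive step.

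\textbf{Step 3: termination.} Termination needs care, because the high-level policy $\pi_H(o\mid o^-,s)$ is queried every step. The options framework is recovered by letting the high-level policy re-choose the same option whenever $\beta(o^-,s)=0$. I would model this as in the two-augmented-MDP construction of \cite{zhang2019dac}: termination is absorbed into the policy or reward, rather than into $T_H^{step}$. This keeps the kernel Markov in the augmented state.

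\textbf{Main obstacle.} The main point to argue carefully is this Markov property of the kernel. Specifically, it must be shown that folding the previous option into the state is enough to make option switching and termination depend only on the current augmented state, so that no history beyond $(o^-,s)$ is needed. With the four components (states, actions, Markov reward, and a valid Markov kernel) established, $\mathcal{M}_H^{step}$ is an MDP.
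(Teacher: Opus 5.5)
Your proposal is correct and follows essentially the same route as the paper. It uses the same augmented state space $\mathcal{O}\times\mathcal{S}$, action space $\mathcal{O}$, reward $r_H^{step}$, and one-step kernel $\mathbb{I}_{\{o'=o\}}\sum_a \pi_L(a\mid s,o)\,T(s'\mid s,a)$, and it likewise keeps termination out of the kernel; the paper later folds termination into $\pi_H^{step}$ via $\beta$, as you suggest. The normalization check you add is absent from the paper. The Markov property you flag as the main obstacle is in fact immediate once $\pi_L$ is fixed and stationary, since the kernel is just a one-step composition of $\pi_L$ with $T$.
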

\begin{proof}
We again define the state space as $\mathcal{O} \times \mathcal{S}$ and the action space as $\mathcal{O}$.
The transition function is defined as:
$T_H^{step}(o, s' | o^-, s, o ) = \sum_{a} \pi_L(a | s, o) \cdot  T(s' | s, a) \cdot \mathbb{I}_{\{o' = o\}}$.
Termination condition is not modeled in the transition function, since option selection occurs at every step.
The reward function is $r_H^{step}$, which satisfies the MDP requirements.
\end{proof}

As a side note, we can derive the expected SMDP high-level reward from the single-step rewards as follows:
\begin{align}
r_H(o^-, s, o) \doteq \mathbb{E}[\sum_{i=1}^k \gamma^{i-1} {r_H^{step}}_{t+i} | \mathcal{E}(o^-, o, s, t)]
\label{eq:highr-until-termination}
\end{align}
where $\mathcal{E}(o^-, o, s, t)$ is the event of initiating option $o$ in state $s$ at timestep $t$ following option $o^-$, and $k$ is the random variable denoting the number of steps after which the initiated option $o$ terminates, as determined by its termination condition $\beta$.

\subsection{Policy Learning with Hierarchical Rewards}
\label{app:learn}
Given hierarchical reward functions in \cref{def: low-level reward} and \cref{def: high-level reward}, agents can learn low- and high-level policies through interactions with the environment under their respective decision-making models.

\subsubsection{Low-Level Policy Learning}
For each option $o_i \in \mathcal{O}$, the low-level decision-making process can be formulated as an MDP $\mathcal{M}_{L, o_i}$ (see \cref{prop:low-level}).
Let $r_{L, o_i}(s, a) = r_L(s, o_i, a)$ represent the reward function for option $o_i$ and $\pi_{o_i}(a|s)$ a policy in the MDP.
The objective for low-level policy learning is to maximize the cumulative discounted rewards until the termination of option $o_i$:%
\begin{align}
\pi_L(a|s, o\myeq o_i) &= \arg\max_{\pi_{o_i}} \mathbb{E}_{s_t \sim T, a_t \sim \pi_{o_i}}[\sum_{t=0}^{h_o} \gamma^t r_{L, o_i}(s_t, a_t)| \mathcal{M}_{L, o_i}]
\label{eq: low-level HRD obj}
\end{align}%
where $h_o$ is the horizon determinied by the option's termination condition $\beta(\cdot, o_i)$. 
Standard RL algorithms for MDPs can be directly used to obtain low-level policies.

\subsubsection{High-Level Policy Learning (SMDP)}
When the high-level decision-making is modeled by an SMDP, the high-level policy $\pi_H(o|o^-, s)$ selects options only at the termination of the previous option, operating on a coarser temporal scale compared to the low-level policy $\pi_L(a|s, o)$.
Let $u = 0, \dots, h$ index the \textit{high-level} decision points, and define $\eta_{o_u}$ as the number of primitive timesteps taken to execute option $o_u$. 
Then, the high-level policy learning objective is:%
\begin{align} \label{eq: high-level-obj-smdp}
\pi_H(o|o^-, s) = \arg\max_{\pi} \mathbb{E}_{(o_{u-1}, s_u, \eta_{o_{u-1}}) \sim T_H, \, o_u \sim \pi} &[\sum_{u=0}^{h} \gamma^{T_u} r_H(o_{u-1}, s_u, o_u) | \mathcal{M}_H], \\
\text{where } T_u &= \sum_{j=-1}^{u-1} \eta_{o_j} \notag
\end{align}%

Here, we need to extend the SMDP by introducing a dummy initial option $o_{\#}$, with zero duration, and let $o_{-1} = o_{\#}$.
Any algorithm for learning option-level policies in an SMDP can be used here. 

\subsubsection{High-Level Policy Learning (MDP)}
The learning objective when the high-level decision-making is modeled by an MDP is:%
\begin{align}
\pi_H^{step}(o|o^-, s) = \arg\max_{\pi} \mathbb{E}_{(o_{t-1}, s_t) \sim T_H^{step}, \, o_t \sim \pi}[\sum_{t=0}^h \gamma^{t} r_H^{step}(o_{t-1}, s_t, o_t))| \mathcal{M}_H^{step}]
\label{eq: high-level-obj-mdp}
\end{align}%
Similarly, we let $o_{-1} = o_{\#}$.
When the termination condition is known, as in our setting, the high-level policy can be expressed as: $\pi_H^{step}(o|o^-, s) \doteq \beta(s, o^-) \pi_H(o|o^-, s) + (1-\beta(s, o^-)) \mathbb{I}_{(o = o^-)}$, where $\pi_H(o|o^-, s)$ specifies the policy at decision points.
This allows leveraging known termination conditions to constrain policy rollouts.

\subsection{Expressivity of HRD}
\label{app:expressivity}
\setcounter{property}{0}
\begin{property}
Certain specifications on \textit{sub-task selection} can be expressed through $\tilde{r}_H(o^-, s, o)$, but they cannot be expressed by a flat reward function: $\tilde{r}_{flat}(s, a)$. 
\end{property}
\begin{proof}
Consider two distinct subtask sequences, \{$o_1 \rightarrow o_2$\} and \{$o_1' \rightarrow o_2'$\} such that after executing either $o_1$ or $o_1'$, the agent arrives at the same state $s^*$.
The designer’s specification is for the agent to follow the corresponding subtask sequences: execute $o_2$ after $o_1$ and $o_2'$ after $o_1'$.
A flat reward function $\tilde{r}_{flat}(s, a)$ cannot represent this specification, as the reward signal depends only on the current state and action, and cannot distinguish whether the agent reached $s^*$ via $o_1$ or $o_1'$.
In contrast, with HRD's high-level reward function, we can specify the ordering even in $s^*$ by defining $\tilde{r}_H(o^-\myeq o_1, s\myeq s^*, o\myeq o_2)>\tilde{r}_H(o^-\myeq o_1, s\myeq s^*, o\neq o_2)$ and $\tilde{r}_H(o^-\myeq o_1', s\myeq s^*, o\myeq o_2')>\tilde{r}_H(o^-\myeq o_1', s\myeq s^*, o\neq o_2')$.
\end{proof}
    
\begin{property} 
Certain specifications on \textit{sub-task execution} can be expressed through $\tilde{r}_L(s, o, a)$, but they cannot be expressed by a flat reward function: $\tilde{r}_{flat}(s, a)$. 
\end{property}

\begin{proof}
Consider a setting where the behavioral specification explicitly depends on the current option $o$.
Such specifications cannot be represented using a flat reward function, as the reward signal $\tilde{r}_{flat}(s, a)$ is identical for all option values. 
\end{proof}

\vfill \newpage
\section{\algofullname(\algo) Pseudocode}
\label{app:pseudocode}
\begin{algorithm}[H]
\caption{\algofullname~(\algo)}
\begin{algorithmic}[1]
\State \textbf{Input:} World model $\mathcal{M}_p$, set of options $\mathcal{O}$, learning routine $\mathcal{A}_{\mathcal{M}_p}$, task reward $r$, pseudo-reward $r_p$, specifications $l$, thresholds for successful subtask and task completion $t_L$ and $t_H$, and an LLM.
\State $\tilde{r}_L^{(1)}, \dots, \tilde{r}_L^{(k)} \leftarrow$ LLM(\Call{LowLevelPrompt}{$l$}) \Comment{Generate low-level alignment rewards}
\For{$i = 1$ to $k$}
    \If{Static\_Check($\tilde{r}_L^{(i)}$)}
    \Statex \hspace{3em} \textit{// $\mathcal{A}_{\mathcal{M}_{p: \, L, \cdot}}$ denotes the learning subroutine for low-level MDPs $\mathcal{M}_{L, \cdot}$}
    \State $\pi_L^{(i)} \leftarrow \mathcal{A}_{\mathcal{M}_{p: \, L, \cdot}}(r_p + \tilde{r}_L^{(i)})$ 
    \EndIf
\EndFor
\State $\mathcal{I} \leftarrow$ Indices of $\pi_L^{(i)}$ achieving cumulative pseudo-rewards above threshold $t_L$
\State $\tilde{r}_H^{(1)}, \dots, \tilde{r}_H^{(k)} \leftarrow$ LLM(\Call{HighLevelPrompt}{$l$}) \Comment{Generate high-level alignment rewards}
\For{$j = 1$ to $k$}
    \If{Static\_Check($\tilde{r}_H^{(j)}$)}
        \State Select low-level policy $\pi_L^{(i)}$, where $i \in \mathcal{I}$ \Comment{Done via hashing in our implementation}
        \Statex \hspace{3em} \textit{// $\mathcal{A}_{\mathcal{M}_{p: \, H}}$ denotes the learning subroutine for the high-level model $\mathcal{M}_H$}
        \State $\pi_H^{(j)} \leftarrow \mathcal{A}_{\mathcal{M}_{p: \, H}}(r + \tilde{r}_H^{(j)}; \pi_L^{(i)})$
    \EndIf
\EndFor
\State \textbf{Output:} Return all alignment rewards $(\tilde{r}_H^{(j)}, \tilde{r}_L^{(i)})$ and corresponding trained policy pairs $(\pi_H^{(j)}, \pi_L^{(i)})$ that achieve cumulative task rewards above threshold $t_H$
\end{algorithmic}
\label{alg:train}
\end{algorithm} \newpage
\section{Experimental Details}
\label{app:exp}
\subsection{Further Domain Details}
\label{app:domains}
\textbf{Rescue World} is a variant of the RW4T domain~\citep{orlov2024rw4t}, a configurable testbed for simulating disaster response scenarios in which a first responder deploys robots to collect scattered supplies and deliver them to designated areas.
For our experiments, we configure the environment with a single robot and represent the world using a discrete grid-based layout (\cref{fig:rescue_world2}).
The robot must determine both \textit{the delivery order} of supplies and \textit{the optimal path} for completing deliveries.
This setting naturally motivates a hierarchical action representation due to the need for subtask sequencing and execution.
The primitive action space $\mathcal{A}$ includes six actions: pick, drop, and four directional movements (up, down, left, right).
The option space $\mathcal{O}$ consists of multi-step pick-up and delivery macro-actions for two supply types: \textit{food} and \textit{medical supplies}.

\textbf{iTHOR} is a simulator built within the AI2-THOR framework~\cite{kolve2017ai2}, featuring realistic household environments where an agent can navigate and interact with everyday objects.
We use \textit{FloorPlan 20} as the environment for our experiments.
In our setup, apples and eggs are spawned on one side of a long kitchen table, the sink is located on the opposite side, and a stool is positioned to the right of the table, as shown in ~\cref{fig:ithor}.
The option space $\mathcal{O}$ consists of pick-up and place operations for each object type, while the action space $\mathcal{A}$ includes navigation actions (move forward, turn left, turn right) and pick/place primitives.
While conducting experiments with expert-provided rewards (Sec.~\ref{app:proofofconcept}), we observe that inducing the agent to follow the \textit{diversity} preference is highly sensitive to the reward scale.
To address this, we include additional contextual information in the LLM prompt specifically about the typical length of the options to guide more consistent reward generation.

\textbf{Kitchen} is a single-agent variant of Overcooked, a benchmark environment originally designed for studying human-AI collaboration in kitchen tasks~\cite{liu2023llm, wang2025zsc}.
In our setting, the agent must prepare a salad using lettuce, tomatoes, and onions.
We adopt the structured option space from~\cite{liu2023llm}, which includes high-level options such as \texttt{chop onion} and \texttt{combine chopped onion and chopped tomatoes}.
The environment also provides hard-coded low-level controllers for executing these options.
The primitive action space $\mathcal{A}$ includes directional movement actions that enable interactions with countertops and other kitchen objects.

\begin{figure}[t]
    \centering
    \begin{subfigure}[b]{0.3\textwidth}
        \centering
        \includegraphics[height=14em]{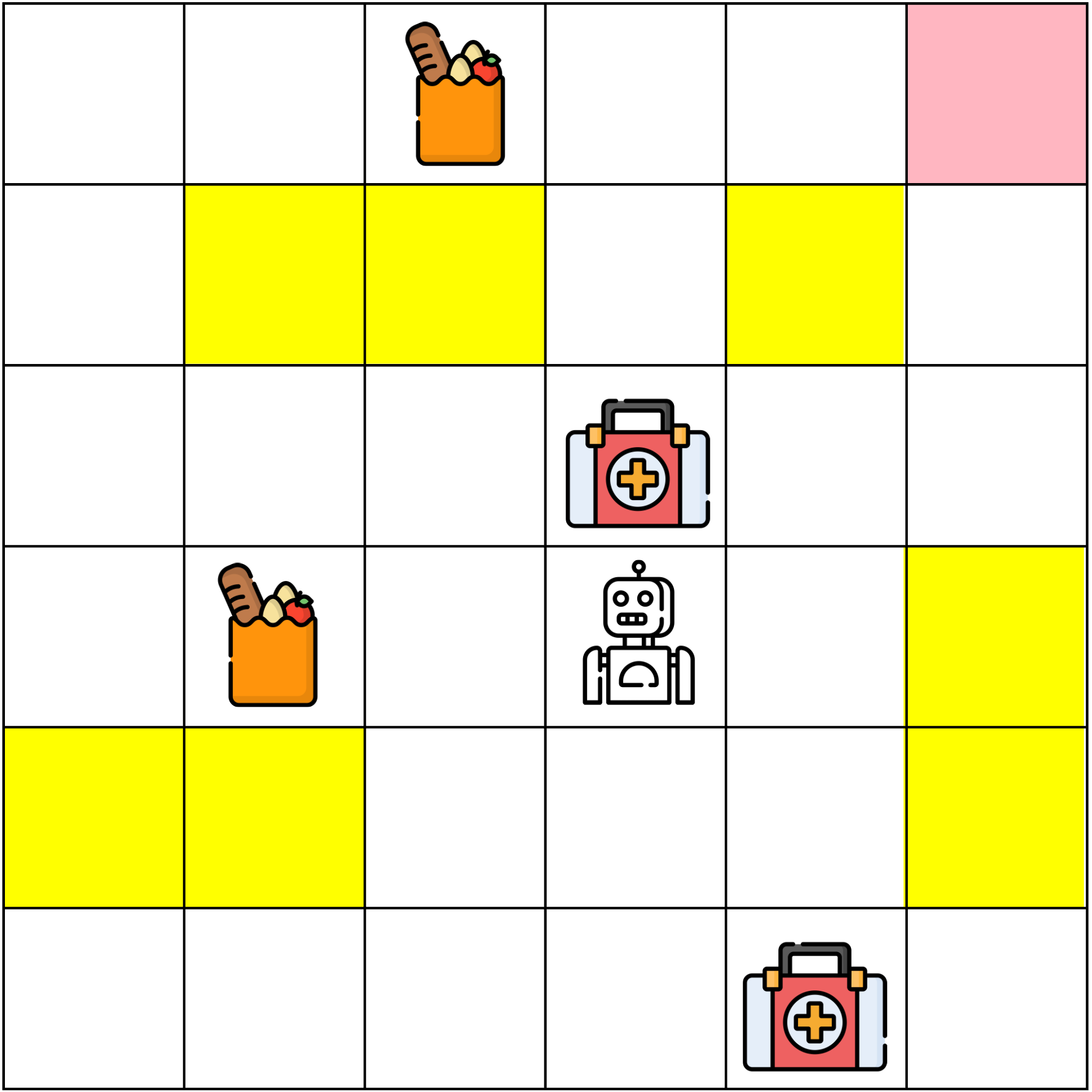} 
        \caption{Rescue World Domain}
        \label{fig:rescue_world2}
    \end{subfigure}
    \hfill
    \begin{subfigure}[b]{0.3\textwidth}
        \centering
        \includegraphics[height=14em]{figures/thor_pnp.png} 
        \caption{iTHOR Domain}
        \label{fig:ithor2}
    \end{subfigure}
    \hfill
    \begin{subfigure}[b]{0.3\textwidth}
        \centering
        \includegraphics[height=14em]{figures/kitchen.png} 
        \caption{Kitchen Domain}
        \label{fig:kitchen2}
    \end{subfigure}
    \caption{Screenshots of renderings of the three task domains used in our study.}
    \label{fig:domain-viz2}
\end{figure}

The visualizations of the three task domains used in our study are shown in Fig.~\ref{fig:domain-viz2}.
The Rescue World visualization was generated using Pygame based on its Gym environment, the iTHOR visualization was created by visually rendering the environment within the Unity engine, and the Kitchen domain visualization was adopted from \cite{liu2023llm}.

\subsection{Results with Expert-Provided Rewards}
\label{app:proofofconcept}

\begin{table*}[t]
\centering
\caption{
Table showing the performance of policies trained with the task reward alone and with task reward combined with expert-provided flat or hierarchical rewards.
For each metric (``High-Level,'' ``Low-Level,'' and ``Total''), we report both the cumulative returns and the percentage of policies achieving expert-level alignment.
A policy is considered expert-aligned at the high or low level if it attains the maximum possible cumulative return for that metric.
``Total'' represents the sum of the task reward, high-level alignment reward, and low-level alignment reward, and a policy is deemed expert-level overall if it aligns at both levels.
}
\label{tab:sim-results-gt}
\begin{center} 
\resizebox{0.95\textwidth}{!}{
        \begin{tabular}{llcccccc}
            \toprule
            \multirow{2}{*}{\textbf{Domain}} & 
            \multirow{2}{*}{\textbf{Method}} & 
            \multicolumn{2}{c}{\textbf{High-Level}} & 
            \multicolumn{2}{c}{\textbf{Low-Level}} & 
            \multicolumn{2}{c}{\textbf{Total}} \\
            \cmidrule(lr){3-4}\cmidrule(lr){5-6}\cmidrule(lr){7-8} &  & \textbf{Rewards $\uparrow$} & \textbf{ Expert \% $\uparrow$} & \textbf{Rewards $\uparrow$} & \textbf{Expert \% $\uparrow$} & \textbf{Rewards $\uparrow$} & \textbf{Expert \% $\uparrow$}
            \\ \midrule
            \multirow{3}{*}{Rescue} & Task & 11.22 $\pm$ 5.57 & 20.00 & -16.46 $\pm$ 5.49 & 0.00 & 73.80 $\pm$ 5.70 & 0.00 \\
            & Flat* & 6.84 $\pm$ 5.24 & 0.00 & \textbf{-0.32 $\pm$ 0.72} & \textbf{80.00} & 85.88 $\pm$ 4.66 & 0.00 \\
            & Hier* & \textbf{20.00 $\pm$ 0.00} & \textbf{100.00} & -1.00 $\pm$ 2.24 & 80.00 & \textbf{97.15 $\pm$ 2.94} & \textbf{80.00} \\
            \midrule
            \multirow{3}{*}{iTHOR} & Task & 4.10 $\pm$ 1.34 & 0.00 & -23.38 $\pm$ 1.86 & 0.00 & 12.31 $\pm$ 0.66 & 0.00 \\
            & Flat* & 7.80 $\pm$ 2.41 & 0.00 & \textbf{0.00 $\pm$ 0.00 }& \textbf{100.00*} & 35.41 $\pm$ 2.98 & 0.00 \\
            & Hier* & \textbf{15.00 $\pm$ 0.00} & \textbf{100.00} & \textbf{0.00 $\pm$ 0.00} & \textbf{100.00} & \textbf{42.61 $\pm$ 0.35} & \textbf{100.00} \\
            \midrule
            \multirow{3}{*}{Kitchen} & Task & 0.00 $\pm$ 0.00 & 0.00 & -- & -- & 0.75 $\pm$ 0.00 & 0.00 \\
            & Flat* & \textbf{0.40 $\pm$ 0.00} & \textbf{100.00} & -- & -- & \textbf{1.10 $\pm$ 0.00} & \textbf{100.00} \\
            & Hier* & \textbf{0.40 $\pm$ 0.00} & \textbf{100.00} & -- & -- & \textbf{1.10 $\pm$ 0.00} & \textbf{100.00} \\
            \bottomrule
        \end{tabular}
        } 
    \end{center}
\end{table*}

\cref{tab:sim-results-gt} reports the results of policies trained with expert-provided flat $\tilde{r}_{flat}^*$ and hierarchical rewards $(\tilde{r}_H^*, \tilde{r}_L^*)$ (shown in the table as \textit{Flat*} and \textit{Hier*} respectively).
In other words, no LLMs were used for reward generation in these experiments.
Across all three domains (Rescue World, iTHOR, and Kitchen), \textit{Hier*} consistently achieves the highest total returns.

In Rescue World, \textit{Hier*} matches \textit{Flat*} in low-level alignment return but significantly outperforms it in high-level alignment, achieving an average total return of 97.15 with 80.00\% policies reaching expert alignment, compared to 85.88 and 0.00\% for \textit{Flat*}.
A similar pattern is observed in iTHOR, where \textit{Hier*} achieves an average total return of 42.61 with all policies reaching expert alignment, while \textit{Flat*} achieves 35.41 with none reaching expert alignment. 
We also note that although \textit{Flat*} matched \textit{Hier*} in low-level alignment for iTHOR, it did so by following the low-level \textit{avoidance} preference regardless of the option.
In contrast, \textit{Hier*} selectively applies this preference to options involving eggs, demonstrating finer-grained alignment with low-level human specifications. 
These results highlight that \textit{Hier*} better captures behavioral dependencies related to the current and previous options, enabling it to represent high-level \textit{persistence} in Rescue World and \textit{diversity} in iTHOR, which are beyond the representational capacity of flat rewards.

In Kitchen, \textit{Hier*} and \textit{Flat*} perform equally well on both task and high-level returns. 
While \textit{Flat*} is theoretically capable of capturing dependencies on the previous subtask in Kitchen by inferring it from environment state, doing so is difficult and error-prone without expert-designed rewards, as shown in Sec.~\ref{sec:experiments}.
Overall, these results demonstrate that expert-designed hierarchical rewards can be easily integrated with task-related rewards to train policies that (1) achieve strong task performance comparable to baselines without reward design (2) align well with behavioral specifications, including those that flat rewards cannot effectively represent.

\subsection{Training Setup}
In our implementation, we used LLMs to generate \textit{single-step} high-level rewards, allowing the high-level decision process to be modeled as either an MDP or an SMDP.
When using an SMDP, we computed the corresponding SMDP rewards using Eq.~\ref{eq:highr-until-termination}. 

\subsubsection{Rescue World}
For Rescue World, we modeled the high-level decision-making as an SMDP and trained the high-level policy $\pi_H$ using DQN~\cite{mnih2015human}.
The hyperparameters for training $\pi_H$ are as follows:
\begin{itemize}
    \item Network: 2 layers with 64 units each and ReLU non-linearities
    \item Optimizer: Adam \citep{kingma2014adam}
    \item Learning rate: $1 \cdot 10^{-4}$
    \item Batch size: 256
    \item Discount: 1.0
    \item Total timesteps: $3 \cdot 10^6$
    \item Buffer size: $1 \cdot 10^6$
    \item Exploration fraction: 0.2
    \item Initial exploration probability: 0.1
    \item Final exploration probability: 0.05
    \item Model update frequency: 4
    \item Number of gradient steps per rollout: 1
    \item Target update interval: $1 \cdot 10^4$
    \item Polyak-averaging \citep{polyak1992acceleration}: 1.0
\end{itemize}

We trained the low-level policy $\pi_L$ using PPO \cite{schulman2017proximal}.
The hyperparameters for training $\pi_L$ are as follows:
\begin{itemize}
    \item Network: 2 layers with 64 units each and ReLU non-linearities
    \item Optimizer: Adam \citep{kingma2014adam}
    \item Learning rate: $3 \cdot 10^{-4}$
    \item Batch size: 64
    \item Discount: 1.0
    \item Total timesteps: $2 \cdot 10^6$
    \item Initial entropy coefficient: 1
    \item Final entropy coefficient: 0.01
    \item Entropy decay fraction: 0.5
    \item Number of environment steps per update: 2048
\end{itemize}

\subsubsection{iTHOR}
For iTHOR, we modeled the high-level decision-making as an SMDP and trained the high-level policy $\pi_H$ using DQN~\cite{mnih2015human}.
The hyperparameters for training $\pi_H$ are as follows:
\begin{itemize}
    \item Network: 2 layers with 128 units each and ReLU non-linearities
    \item Optimizer: Adam \citep{kingma2014adam}
    \item Learning rate: $1 \cdot 10^{-4}$
    \item Batch size: 32
    \item Discount: 0.99
    \item Total timesteps: $5.0 \cdot 10^5$
    \item Buffer size: $5 \cdot 10^5$
    \item Exploration fraction: 0.25
    \item Initial exploration probability: 1.0
    \item Final exploration probability: 0.05
    \item Model update frequency: 4
    \item Number of gradient steps per rollout: 1
    \item Target update interval: $1 \cdot 10^4$
    \item Polyak-averaging \citep{polyak1992acceleration}: 1.0
\end{itemize}

We trained the low-level policy $\pi_L$ using PPO \cite{schulman2017proximal}.
The hyperparameters for training $\pi_L$ are as follows:
\begin{itemize}
    \item Network: 2 layers with 64 units each and ReLU non-linearities
    \item Optimizer: Adam \citep{kingma2014adam}
    \item Learning rate: $3 \cdot 10^{-4}$
    \item Batch size: 64
    \item Discount: 1.0
    \item Total timesteps: $1.5 \cdot 10^6$
    \item Initial entropy coefficient: 1
    \item Final entropy coefficient: 0.01
    \item Entropy decay fraction: 0.5
    \item Number of environment steps per update: 2048
\end{itemize}

\subsubsection{Kitchen}
For Kitchen, we modeled the high-level decision-making as an MDP and trained the high-level policy $\pi_H$ using DQN~\cite{mnih2015human}, implemented so that termination conditions $\beta$ were enforced during rollouts.
We adopted the MDP formulation because it outperformed the SMDP setting with DQN in this domain.
Given the highly delayed rewards and the importance of subtask sequencing in Kitchen, we also incorporated specification-agnostic demonstrations to bootstrap policy learning.
Hyperparameters for learning the high-level policy $\pi_H$ are as follows:
\begin{itemize}
    \item Network: 2 layers with 256 units each and ReLU non-linearities
    \item Optimizer: Adam \citep{kingma2014adam}
    \item Learning rate: $1 \cdot 10^{-6}$ with linear scheduling
    \item Batch size: 256
    \item Discount: 0.99
    \item Total timesteps: $3 \cdot 10^6$
    \item Buffer size: $1 \cdot 10^6$
    \item Exploration fraction: 0.33
    \item Initial exploration probability: 0.5
    \item Final exploration probability: 0.1
    \item Model update frequency: 4
    \item Number of gradient steps per rollout: 1
    \item Target update interval: $1 \cdot 10^4$
    \item Polyak-averaging \citep{polyak1992acceleration}: 1.0
\end{itemize}

For Rescue World and Kitchen, we used a server with 30 vCPUs and an NVIDIA A10 GPU (24GB PCIe) to train $k$ policies in parallel, each corresponding to one of the $k$ reward candidates generated by the LLM. For iTHOR, we used a server with an NVIDIA GeForce RTX 5090 GPU to train our policies.

\subsection{LLM Prompts}
\label{app:prompts}
The prompts used in our work are adapted from \cite{ma2023eureka}, but differ in important ways to realize hierarchical rewards. 
Specifically, our prompts are designed to:
(1) reflect a hierarchical reward structure; and
(2) generate rewards that align with behavioral specifications while preserving task feasibility.

\subsubsection{System Prompt}
As in \cite{ma2023eureka}, our system prompt provides a concise, domain-agnostic description of the reward design task and defines the function signature that the LLM should use in its output. 
The full system prompt is provided in Prompt~1 on \cpageref{box:prompt_sys}.

\subsubsection{User Prompt}
The user prompts follow a similar methodology to that of \cite{ma2023eureka}, using code snippets as contextual input and an accompanying task-specific natural language description.
To support hierarchical reward generation, we extend the accompanying task description by providing the following: (1) a description of relevant action spaces (i.e., the option space $\mathcal{O}$ and/or action space $\mathcal{A}$) to help the LLM distinguish between temporally extended behaviors and primitive actions; (2) a behavioral specification describing preferences beyond task completion; (3) additional code formatting guidelines to emphasize that the LLM should capture behavioral logic without making the reward function stateful (e.g., storing variables across calls).
Moreover, the complexity of our domains introduces two additional challenges, which we address by augmenting the code snippets with further contextual information:

\textbf{Cross-file Dependencies.}
In our environments, key components of the task logic often depend on constants and definitions from separate supporting files (e.g., \texttt{utils.py}).
To address this, we manually copied the necessary definitions from these files and included them as background comments at the top of the environment code provided to the LLM.
This ensures all relevant constants and definitions are explicitly exposed during reward generation.

\textbf{Complex Observation Representations.}
Our environments feature structured observations, such as spatial maps, whose semantics are not fully captured by the observation's shape or naming alone.
For example, it can be difficult to infer what each value in the observation (e.g., map cell) represents from the environment code.
To mitigate this, we also provided an example observation input as part of the background comments in cases where the LLM might find it challenging to correctly interpret the structure and meaning of the observation space.

To ensure a fair comparison across conditions, the same code snippets of each domain were used for all reward generation tasks, whether generating low-level, high-level, or flat rewards.
The full environment contexts and corresponding prompts for both Rescue World and Kitchen are shown in Prompt 2 and Prompt 3 on \cpageref{box:prompt_rw} and \cpageref{box:prompt_oc} respectively.

\begin{tcolorbox}[title={Prompt 1: System Prompt},breakable, label={box:prompt_sys}]
You are a reward engineer trying to write reward functions to solve reinforcement learning tasks as effective as possible.
A programmer has already specified the task reward, and your job is to specify additional rewards according to the user's preference.
More specifically, your goal is to write an additional reward function for the environment to help the agent complete the task according to user preference. 
Your reward function should use useful variables from the environment as inputs. As an example,
the reward function signature can be:

\bigskip
\textit{The LLM is presented with one of the following function signatures, selected based on the desired reward function to be designed. }
\begin{minted}[fontsize=\small,baselinestretch=1]{python}
def get_high_level_pref_gpt(state: Dict, prev_option: int, option: int) -> 
    Tuple[float, Dict[str, float]]:
    '''
    state: the current state of the environment.
    prev_option: the last option (subtask) executed by the agent to reach the 
    current state.
    option: the option (subtask) the agent is about to perform in the current 
    state.
    '''
    ...
    return reward, reward_components
\end{minted}

\smallskip
\begin{minted}[fontsize=\small,baselinestretch=1]{python}
def get_low_level_pref_gpt(state: Dict, option: int, action: int) -> 
    Tuple[float, Dict[str, float]]:
    '''
    state: the current state of the environment.
    option: the option (subtask) selected by the agent in the current state.
    action: the action that the agent is about to perform in the current state.
    '''
    ...
    return reward, reward_components
\end{minted}

\smallskip
\begin{minted}[fontsize=\small,baselinestretch=1]{python}
def get_flat_sa_pref_gpt(state: Dict, action: int) -> Tuple[float, 
    Dict[str, float]]:
    '''
    state: the current state of the environment.
    action: the (low-level) action that the agent is about to perform in the 
    current state.
    '''
    ...
    return reward, reward_components
\end{minted}

The output of the reward function should consist of two items: \\
(1) the user preference reward, \\
(2) a dictionary of each individual reward component in the user preference reward. \\
The code output should be formatted as a python code string: "python ... ".

Some helpful tips for writing the reward function code: \\ 
(1) Most importantly, the reward code's input variables must contain only attributes of the provided environment class definition (namely, variables that have prefix self.). Under no circumstance can you introduce new input variables.
\end{tcolorbox}

\begin{tcolorbox}[title={Prompt 2: User Prompt for Rescue World},breakable, label={box:prompt_rw}]
The Python environment is

\begin{minted}[fontsize=\small,baselinestretch=1]{python}
'''
Background:

1) Initial game map example
init_map = np.array(
    [[
        rw4t_utils.RW4T_State.empty.value, 
        rw4t_utils.RW4T_State.empty.value,
        rw4t_utils.RW4T_State.circle.value, 
        rw4t_utils.RW4T_State.empty.value,
        rw4t_utils.RW4T_State.yellow_zone.value,
        rw4t_utils.RW4T_State.school.value
    ],
     [
         rw4t_utils.RW4T_State.empty.value,
         rw4t_utils.RW4T_State.yellow_zone.value,
         rw4t_utils.RW4T_State.yellow_zone.value,
         rw4t_utils.RW4T_State.empty.value,
         rw4t_utils.RW4T_State.yellow_zone.value,
         rw4t_utils.RW4T_State.empty.value
     ],
     [
         rw4t_utils.RW4T_State.empty.value, 
         rw4t_utils.RW4T_State.empty.value,
         rw4t_utils.RW4T_State.empty.value, 
         rw4t_utils.RW4T_State.square.value,
         rw4t_utils.RW4T_State.yellow_zone.value,
         rw4t_utils.RW4T_State.empty.value
     ],
     [
         rw4t_utils.RW4T_State.empty.value, 
         rw4t_utils.RW4T_State.circle.value,
         rw4t_utils.RW4T_State.empty.value, 
         rw4t_utils.RW4T_State.empty.value,
         rw4t_utils.RW4T_State.empty.value, 
         rw4t_utils.RW4T_State.empty.value
     ],
     [
         rw4t_utils.RW4T_State.yellow_zone.value,
         rw4t_utils.RW4T_State.yellow_zone.value,
         rw4t_utils.RW4T_State.empty.value, 
         rw4t_utils.RW4T_State.empty.value,
         rw4t_utils.RW4T_State.yellow_zone.value,
         rw4t_utils.RW4T_State.yellow_zone.value
     ],
     [
         rw4t_utils.RW4T_State.empty.value, 
         rw4t_utils.RW4T_State.empty.value,
         rw4t_utils.RW4T_State.empty.value, 
         rw4t_utils.RW4T_State.empty.value,
         rw4t_utils.RW4T_State.square.value, 
         rw4t_utils.RW4T_State.empty.value
     ]])

2) rw4t.utils:
class RW4T_LL_Actions(Enum):
  go_left = 0
  go_down = 1
  go_right = 2
  go_up = 3
  pick = 4
  drop = 5
  idle = 6


class RW4T_HL_Actions_EZ(Enum):
  go_to_circle = 0
  deliver_circle = 1
  go_to_square = 2
  deliver_square = 3


class RW4T_HL_Actions_With_Dummy_EZ(Enum):
  go_to_circle = 0
  deliver_circle = 1
  go_to_square = 2
  deliver_square = 3
  dummy = 4


class RW4T_State(Enum):
  empty = 0
  circle = 1
  square = 2
  triangle = 3
  obstacle = 4
  yellow_zone = 5
  orange_zone = 6
  red_zone = 7
  school = 8
  hospital = 9
  park = 10


class Holding_Obj(Enum):
  empty = 0
  circle = 1
  square = 2
  triangle = 3
'''

import numpy as np
import gymnasium as gym

import rw4t.utils as rw4t_utils


class RW4T_GameState:

  def __init__(self, obs: np.ndarray, pos: np.ndarray, holding: int,
               option_mask: np.ndarray):
    '''
    :param obs: a 2D numpy of the current environment
    :param pos: a 1D numpy array of the agent's (x, y) position in the
                environment
    :param holding: an integer indicating what object the agent is currently
                    holding if any.
                    This parameter only has a non-empty value AFTER the agent
                    performs a 'pick up ...' option and BEFORE it performs a
                    'deliver ...' option.
    :param option_mask: a 1D array indicating the valid options to select next
                        (should not be used when computing rewards, this is only
                        used in some downstream algorithms)
    '''
    # Y pos in bound
    assert pos[1] >= 0 and pos[1] < len(obs)
    # X pos in bound
    assert pos[0] >= 0 and pos[0] < len(obs[0])
    # holding should be a value in the Holding_Obj Enum
    assert holding < len(rw4t_utils.Holding_Obj)
    self.obs = obs
    self.pos = pos
    self.holding = holding
    self.option_mask = option_mask

  def state_to_dict(self):
    return {
        'map': np.array(self.obs, dtype=np.int32),
        'pos': np.array(self.pos, dtype=np.int32),
        'holding': self.holding,
        'option_mask': self.option_mask
    }


class RW4TEnv(gym.Env):

  def get_state(self):
    state = RW4T_GameState(self.map, self.agent_pos, self.agent_holding,
                           self.option_mask)
    state_dict = state.state_to_dict()
    return state_dict

\end{minted}

Write a reward function for the following task: \\

\textit{The LLM receives one of the prompts below, chosen according to the reward function we want it to design.
The prompt for each setting is identical except for (1) descriptions of relevant task spaces (e.g., the flat-reward prompt omits the options space description) and (2) the behavioral specification (``user preference'') string.}

\smallskip
\subsection*{High-Level}
\textbf{Task description:} \\
The task objective is to deliver all objects on the map.
In the task reward, the agent gets a reward of +30 when it successfully delivers an object,
and a step cost of -1 for each time step taken.
The reward function you write does not need to encode the task objective. 

\textbf{Relevant task spaces:} \\
The agent's option/subtask (referred to as HL\_Action in the code) space consists of going to and delivering the two types of objects.
Each option takes multiple action steps to complete. 
Taking a 'go to' option means that the agent will navigate to a supply and pick it up.
Taking a 'deliver' option means that the agent will navigate to the delivery location and drop the object.
Note that the agent has to first go to the object to pick it up before delivering the object.

\textbf{User preference:} \\
The agent should pick up an object type that's the same as the previously delivered object type, 
if there are still objects of that type remaining in the environment.
Otherwise, the agent should pick up an object of a different type.

\textbf{Additional info:} \\
You need to write a reward function to encode this user preference.
The preference function you write will be used together with the task reward to train the agent.
Please make sure NOT to make the reward function stateful (i.e. you should not use function attributes or global variables.)
You should also not write any other helper functions.

\smallskip
\subsection*{Low-Level}
\textbf{Task description:} \\
The task objective is to deliver all objects on the map.
In the task reward, the agent gets a reward of +30 when it successfully delivers an object,
and a step cost of -1 for each time step taken.
The reward function you write does not need to encode the task objective. 

\textbf{Relevant task spaces:} \\
The agent's option/subtask (referred to as HL\_Action in the code) space consists of going to and delivering the two types of objects.
Each option takes multiple action steps to complete. 
Taking a 'go to' option means that the agent will navigate to a supply and pick it up.
Taking a 'deliver' option means that the agent will navigate to the delivery location and drop the object.
Note that the agent has to first go to the object to pick it up before delivering the object.
The agent's action (referred to as LL\_Action in the code) space consists of moving in the four cardinal directions, as well as atomic actions pick and drop.
The agent can only perform LL\_Action "pick" if it is at the same location as the object.

\textbf{User preference:} \\
The agent should avoid yellow danger zones when it is delivering an object.
However, the agent does not need to avoid danger zones when it is going to an object.

\textbf{Additional info:} \\
You need to write a reward function to encode this user preference.
The preference function you write will be used together with the task reward to train the agent.
Please make sure NOT to make the reward function stateful (i.e. you should not use function attributes or global variables.)
You should also not write any other helper functions.

\smallskip
\subsection*{Flat}
\textbf{Task description:} \\
The task objective is to deliver all objects on the map.
In the task reward, the agent gets a reward of +30 when it successfully delivers an object,
and a step cost of -1 for each time step taken.
The reward function you write does not need to encode the task objective. 

\textbf{Relevant task spaces:} \\
The agent's action (referred to as LL\_Action in the code) space consists of moving in the four cardinal directions, as well as atomic actions pick and drop.
The agent can only perform LL\_Action "pick" if it is at the same location as the object.

\textbf{User preference:} \\
The agent should pick up an object type that's the same as the previously delivered object type, 
if there are still objects of that type remaining in the environment.
Otherwise, the agent should pick up an object of a different type.
In addition, the agent should avoid yellow danger zones when it is delivering an object.
However, the agent does not need to avoid danger zones when it is going to an object.

\textbf{Additional info:} \\
You need to write a reward function to encode this user preference.
The preference function you write will be used together with the task reward to train the agent.
Please make sure NOT to make the reward function stateful (i.e. you should not use function attributes or global variables.)
You should also not write any other helper functions.
\end{tcolorbox}

\begin{tcolorbox}[title={Prompt 3: User Prompt for iTHOR},breakable, label={box:prompt_ithor}]
The Python environment is

\begin{minted}[fontsize=\small,baselinestretch=1]{python}
'''
Background:

1) utils:
PnP_LL_Actions = [
    "MoveAhead",
    "RotateLeft",
    "RotateRight",
    "PickupNearestTarget",
    "PutHeldOnReceptacle",
]

class PnP_HL_Actions(Enum):
    pick_apple = 0
    pick_egg = 1
    drop_apple = 2
    drop_egg = 3

class PnP_HL_Actions_With_Dummy(Enum):
    pick_apple = 0
    pick_egg = 1
    drop_apple = 2
    drop_egg = 3
    dummy = 4
'''

import random
import gymnasium as gym
import numpy as np
from gymnasium import spaces
from ai2thor.controller import Controller
from ai2thor.platform import CloudRendering
from typing import Dict, List, Optional

from HierRL.envs.ai2thor.pnp_training_utils import (PnP_HL_Actions,
                                                    PnP_HL_Actions_With_Dummy,
                                                    PnP_LL_Actions)
from HierRL.envs.ai2thor.pnp_config import avoid_stool


class ThorPickPlaceEnv(gym.Env):
  """
  Pick-and-place environment on top of AI2-THOR, using the Gymnasium API.

  Episode structure:
    - reset() loads a kitchen scene, curates it (move/disable/spawn a few
      items), and returns an observation.
    - step(a) applies either low-level nav (Move/Rotate/Look) or a simple HL
      manipulation (Pickup nearest target / Put on nearest receptacle / Drop).
    - reward is currently 0/1 placeholder (see _compute_reward_and_done).
  """
  metadata = {"render_modes": ["rgb_array"]}

  def __init__(
      self,
      scene: str = "FloorPlan20",  # scene id
      pref_dict: Dict[str, List[int]] = avoid_stool,  # preference dictionary
      visibilityDistance: float = 1,  # meters for "visible" flag (not reach)
      grid_size: float = 0.25,  # movement step in meters
      snap_to_grid: bool = True,  # keep motion aligned to grid
      rotate_step_degrees: int = 90,  # degree per rotate action
      render_depth: bool = False,
      render_instance_masks: bool = False,
      target_types=('Apple',
                    'Egg'),  # categories of objects that the agent can pick
      receptacle_types=("SinkBasin", ),  # categories we allow "PutObject" on
      max_steps: int = None,
      low_level: bool = False,  # whether we are working with low-level only
      hl_pref_r=None,
      option: PnP_HL_Actions = None,
      seed: Optional[int] = None,
      render: bool = True):
    super().__init__()
    # Save config
    self.scene = scene
    self.max_steps = max_steps
    self.target_types = set(target_types)
    self.receptacle_types = set(receptacle_types)
    self._rng = random.Random(seed)

    h, w = 600, 600
    platform = None if render else CloudRendering
    self.need_render = render
    self.controller = Controller(
        width=w,
        height=h,
        scene=self.scene,
        gridSize=grid_size,
        snapToGrid=snap_to_grid,
        rotateStepDegrees=rotate_step_degrees,
        renderDepthImage=render_depth,
        renderInstanceSegmentation=render_instance_masks,
        visibilityDistance=visibilityDistance,
        platform=platform)
    self.controller.step(action="Initialize", gridSize=grid_size)

    # Observation: dictionary-based state space.
    self.observation_space = spaces.Dict({
        "apple_1_pos":
        spaces.Box(-3.0, 3.0, (2, ), dtype=np.float32),
        "apple_2_pos":
        spaces.Box(-3.0, 3.0, (2, ), dtype=np.float32),
        "egg_1_pos":
        spaces.Box(-3.0, 3.0, (2, ), dtype=np.float32),
        "egg_2_pos":
        spaces.Box(-3.0, 3.0, (2, ), dtype=np.float32),
        "stool_pos":
        spaces.Box(-3.0, 3.0, (2, ), dtype=np.float32),
        "sink_pos":
        spaces.Box(-3.0, 3.0, (2, ), dtype=np.float32),
        "agent_pos":
        spaces.Box(-3.0, 3.0, (2, ), dtype=np.float32),  # x and z pos
        "agent_rot":
        spaces.Box(0.0, 1.0, (4, ),
                   dtype=np.float32),  # y rot (one-hot encoded)
        "apple_1_state":
        spaces.Discrete(3),  # 0 = on table, 1 = held, 2 = in sink
        "apple_2_state":
        spaces.Discrete(3),  # 0 = on table, 1 = held, 2 = in sink
        "egg_1_state":
        spaces.Discrete(3),  # 0 = on table, 1 = held, 2 = in sink
        "egg_2_state":
        spaces.Discrete(3),  # 0 = on table, 1 = held, 2 = in sink
    })

    # Whether we are working with the low-level only
    self.low_level = low_level

    # Adjust task/subtask horizons
    if max_steps is not None:
      self.max_steps = max_steps
    else:
      if self.low_level:
        self.max_steps = 100
      else:
        self.max_steps = 500

    # Define action spaces
    self.pnp_ll_actions = PnP_LL_Actions
    self.pnp_hl_actions = PnP_HL_Actions
    self.pnp_hl_actions_with_dummy = PnP_HL_Actions_With_Dummy

    # Low level action space: iThor environment commands
    self.ll_action_space = spaces.Discrete(len(self.pnp_ll_actions))
    self.hl_action_space = spaces.Discrete(len(self.pnp_hl_actions))

    # High level action space: Options (pick up/drop specific items)
    # Option values
    self.option = option

    # Initialize environment
    self._setup_env()
    if self.low_level:
      if self.option is None:
        self.option = random.choice(list(self.pnp_hl_actions)).value
      self.action_space = self.ll_action_space
      self.reset(options={'option': self.option})
    else:
      # Replace option with dummy value for high level training
      self.option = self.pnp_hl_actions_with_dummy.dummy.value
      self.action_space = self.hl_action_space
      self.reset()

    self.steps = 0

    # Set preferences
    self.pref_dict = pref_dict

    # Rewards initialization
    self.hl_pref_r = hl_pref_r

    self._per_step_reward = -0.01
    self._obj_drop_reward = 10.0
    self._obj_pick_reward = 10.0
    self._wrong_obj_pick_reward = -5.0
    self._dist_shaping_factor = -0.05
    self._ll_penalty = -1
    self._ll_radius = 1.5
    self._hl_diversity_reward = 5.0

    self.prev_option = self.pnp_hl_actions_with_dummy.dummy.value
    self.c_task_reward = 0
    self.c_pseudo_reward = 0
    self.c_gt_hl_pref = 0
    self.c_gt_ll_pref = 0

    # Used for determining successful placement into receptacle
    self._drop_success = False
    self._pick_apple_success = False
    self._pick_egg_success = False
    
\end{minted}

Write a reward function for the following task: \\

\textit{The LLM receives one of the prompts below, chosen according to the reward function we want it to design.
The prompt for each setting is identical except for (1) descriptions of relevant task spaces (e.g., the flat-reward prompt omits the options space description) and (2) the behavioral specification (``user preference'') string.}

\smallskip
\subsection*{High-Level}
\textbf{Task description:} \\
The task objective is to pick up all apples and eggs on the dining table and place them in the sink.
In the task reward, the agent gets a reward of +10 after it successfully picks up an object and places it in the sink, and a step cost of -0.1 for each time step taken.
The reward function you write does not need to encode the task objective.

\textbf{Relevant task spaces:} \\
The agent's option/subtask (referred to as self.pnp\_hl\_actions in the code) space consists of picking up and placing the two types of objects.
Each option takes multiple action steps to complete. 
Taking a 'pick' option means that the agent will navigate to an object and pick it up.
Taking a 'place' option means that the agent will navigate to the delivery location and place the object there.
Note that the agent has to first go to the object to pick it up before placing the object.

\textbf{User preference:} \\
The agent should pick up an object type that's different from the previously placed object type, as long as there are objects of the other type on the table need to be picked.

\textbf{Additional info:} \\
You need to write a reward function to encode this user preference.
The preference function you write will be used together with the task reward to train the agent.
It can take up to 30 steps to reach an object and pick it up, or to reach the sink and drop it off.
Make sure your reward scaling gives the preference for alternating objects much more weight than the negative step rewards, but still lower than the positive task reward.
Please make sure NOT to make the reward function stateful (i.e. you should not use function attributes or global variables).
You should also not write any other helper functions.

\smallskip
\subsection*{Low-Level}
\textbf{Task description:} \\
The task objective is to pick up all apples and eggs on the dining table and place them in the sink.
In the task reward, the agent gets a reward of +10 after it successfully picks up an object and places it in the sink, and a step cost of -0.1 for each time step taken.
The reward function you write does not need to encode the task objective.

\textbf{Relevant task spaces:} \\
The agent's option/subtask (referred to as self.pnp\_hl\_actions in the code) space consists of picking up and placing the two types of objects.
Each option takes multiple action steps to complete. 
Taking a 'pick' option means that the agent will navigate to an object and pick it up.
Taking a 'place' option means that the agent will navigate to the delivery location and place the object there.
Note that the agent has to first go to the object to pick it up before placing the object. The agent's action (referred to as self.pnp\_ll\_actions in the code) space consists of the primitives for: moving forward, rotating left, rotating right, picking up the closest object, and placing a held object in receptacle.
The agent can only perform the low level pick or place primitive only if the agent is close enough to an object or a receptacle.

\textbf{User preference:} \\
The agent should avoid the stool in the environment both when it is on its way to pick up an egg and place an egg down.
More specifically, the agent should be penalized when it is within 1.5 meters of the stool.
However, the agent does not need to avoid the stool when it is on its way to pick up an apple or place an apple down.

\textbf{Additional info:} \\
You need to write a reward function to encode this user preference.
The preference function you write will be used together with the task reward to train the agent.
Please make sure NOT to make the reward function stateful (i.e. you should not use function attributes or global variables).
You should also not write any other helper functions.

\smallskip
\subsection*{Flat}
\textbf{Task description:} \\
The task objective is to pick up all apples and eggs on the dining table and place them in the sink.
In the task reward, the agent gets a reward of +10 after it successfully picks up an object and places it in the sink, and a step cost of -0.1 for each time step taken.
The reward function you write does not need to encode the task objective.

\textbf{Relevant task spaces:} \\
The agent's action (referred to as self.pnp\_ll\_actions in the code) space consists of the primitives for: moving forward, rotating left, rotating right, picking up the closest object, and placing a held object in receptacle.
The agent can only perform the low level pick or place primitive only if the agent is close enough to an object or a receptacle.

\textbf{User preference:} \\
The agent should avoid the stool in the environment both when it is on its way to pick up an egg and place an egg down.
More specifically, the agent should be penalized when it is within 1.5 meters of the stool.
However, the agent does not need to avoid the stool when it is on its way to pick up an apple or place an apple down. In addition, the agent should pick up an object type that's different from the previously placed object type, as long as there are objects of the other type on the table need to be picked.

\textbf{Additional info:} \\
You need to write a reward function to encode this user preference.
The preference function you write will be used together with the task reward to train the agent.
It can take up to 30 steps to reach an object and pick it up, or to reach the sink and drop it off.
Make sure your reward scaling gives the preference for alternating objects much more weight than the negative step rewards, but still lower than the positive task reward.
Please make sure NOT to make the reward function stateful (i.e. you should not use function attributes or global variables.)
You should also not write any other helper functions.
\end{tcolorbox}

\begin{tcolorbox}[title={Prompt 4: User Prompt for Kitchen},breakable, label={box:prompt_oc}]
The Python environment is

\begin{minted}[fontsize=\small,baselinestretch=1]{python}
'''
Background:
1) Ingredients:
class Ingredients(Enum):
  empty = 0
  tomato = 1
  onion = 2
  lettuce = 3

2) Salad types:
class SoupType(Enum):
  no_soup = 0
  alice = 1
  bob = 2
  cathy = 3
  david = 4

3) All available options:
{'Chop Tomato': 0, 'Chop Lettuce': 1, 'Chop Onion': 2, 
'Prepare David Ingredients': 3, 'Plate David Salad': 4}

4) All available actions:
{0: (0, -1),
 1: (0, 1),
 2: (1, 0),
 3: (-1, 0),
 4: (0, 0)}
If the agent is standing next to a counter, performing an action in the 
direction of the counter interacts with the counter.
For example, if the agent is standing under a counter, performing action 0 
(goes up) interacts with the counter above the agent.
'''

import gymnasium as gym


class OvercookedSimple(gym.Env):

  def get_plain_state(self, raw_info):
    '''
    The output of this function will be the input state in the generated reward
    function.

    The state is a dictionary that maps object names to their locations on the
    map.

    If the object 'obj' is at location (x, y), then state['obj'][y, x] == 1.
    Otherwise, state['obj'][y, x] == 0.
    '''
    num_rows = self.world_size[1]
    num_cols = self.world_size[0]
    state_dict = {}

    # Process Grid Squares Map
    GRIDSQUARES = [
        "Floor", "Counter", "Cutboard", "Bin", "Pot", "FreshTomatoTile",
        "FreshOnionTile", "FreshLettuceTile", "PlateTile"
    ]
    gridsquares_map = raw_info['gridsquare']
    for gridsquare_type in GRIDSQUARES:
      grid_map = gridsquares_map[gridsquare_type].T
      assert grid_map.shape == (num_rows, num_cols)
      state_dict[gridsquare_type] = grid_map

    # Process Object Map
    OBJECTS = ['FreshTomato', 'FreshLettuce', 'FreshOnion'] + [
        'ChoppingTomato', 'ChoppingOnion', 'ChoppingLettuce'
    ] + ['ChoppedTomato', 'ChoppedOnion', 'ChoppedLettuce'] + ['Plate']
    objects_map = raw_info['objects']
    for obj_type in OBJECTS:
      obj_map = objects_map[obj_type].T
      assert obj_map.shape == (num_rows, num_cols)
      state_dict[obj_type] = obj_map

    # Process Agent Map
    agent_map = raw_info['agent_map']['agent-1'].T
    assert agent_map.shape == (num_rows, num_cols)
    state_dict['agent'] = agent_map

    return state_dict

\end{minted}

Write a reward function for the following task: \\

\textit{The LLM receives one of the prompts below, chosen according to the reward function we want it to design.
The prompt for each setting is identical except for (1) descriptions of relevant task spaces (e.g., the flat-reward prompt omits the options space description) and (2) the behavioral specification (``user preference'') string.}

\smallskip
\subsection*{High-Level}
\textbf{Task description:} \\
The task objective is to prepare one David's salad with three ingredients: onion, lettuce, and tomatoes. 
To make this salad, the agent needs to:
a. Chop ingredients (onion, lettuce, and tomatoes). 
   Only one ingredient of each type is needed to complete the salad. 
b. Combined chopped ingredients. 
c. Plate the salad.
The task reward already encodes the task objective.
In the task rewrd, the agent receives a reward of +1 when it completes a salad, and a step cost of -0.01 for each time step taken.
The reward function you write does not need to encode the task objective.  

\textbf{Relevant task spaces:} \\
The agent's option (subtask) space consists of macro cooking actions, such as 'Chop Onion' and 'Plate David Salad'.
Each option takes multiple action steps to complete. 

\textbf{User preference:} \\
The agent should chop an onion after it chops a tomato, and the agent should chop a lettuce after it chops an onion. 
If the ingredients are already chopped or a combined salad already exists, the agent should not chop more ingredients.
The mixing order of the ingredients does not matter.

\textbf{Additional info:} \\
You need to write a reward function to encode this user preference.
The preference function you write will be used together with the task reward to train the agent.
Please make sure NOT to make the reward function stateful (i.e. you should not use function attributes or global variables.)
You should also not write any other helper functions.

\smallskip
\subsection*{Flat}
\textbf{Task description:} \\
The task objective is to prepare one David's salad with three ingredients: onion, lettuce, and tomatoes. 
To make this salad, the agent needs to:
a. Chop ingredients (onion, lettuce, and tomatoes). 
   Only one ingredient of each type is needed to complete the salad. 
b. Combined chopped ingredients. 
c. Plate the salad.
The task reward already encodes the task objective.
In the task reward, the agent receives a reward of +1 when it completes a salad, and a step cost of -0.01 for each time step taken.
The reward function you write does not need to encode the task objective.  

\textbf{Relevant task spaces:} \\
The agent's action (low-level action) space consists of moving in the four cardinal directions (up, down, left, right) and idle.
When an agent is standing next to a counter, performing an action in the direction of the counter interacts with the counter.

\textbf{User preference:} \\
The agent should chop an onion after it chops a tomato, and the agent should chop a lettuce after it chops an onion.
If the ingredients are already chopped or a combined salad already exists, the agent should not chop more ingredients.
The mixing order of the ingredients does not matter.

\textbf{Additional info:} \\
You need to write a reward function to encode this user preference.
The preference function you write will be used together with the task reward to train the agent.
Please make sure NOT to make the reward function stateful (i.e. you should not use function attributes or global variables.)
You should also not write any other helper functions.
\end{tcolorbox}

\subsection{Example LLM-Generated Rewards}
\label{app:llm-output-examples}

Here, we provide examples of high-level, low-level, and flat rewards generated using the prompts from the previous section. \\

\subsubsection{Rescue World}
~
\begin{tcolorbox}[title={Example Flat Preference Reward for Rescue World},breakable, label={box:reward_rw_flat}, before skip=\baselineskip]
\begin{minted}[fontsize=\small,baselinestretch=1]{python}
def get_flat_sa_pref_gpt(state: Dict, action: int) -> Tuple[float, Dict[str, float]]:
    '''
    state: the current state of the environment.
    action: the (low-level) action that the agent is about to perform in the current state.
    '''
    current_pos = state['pos']
    current_map = state['map']
    current_holding = state['holding']
    recently_delivered = -1  # To be inferred from context; assuming function is aware of last delivery
    reward = 0.0
    reward_components = {
        'consistency_bonus': 0.0,
        'danger_zone_penalty': 0.0
    }

    # Reward consistency in type picking
    if action == rw4t_utils.RW4T_LL_Actions.pick.value:
        # Get the object type at current position
        object_type = current_map[current_pos[1], current_pos[0]]
        
        # Check if the object type is the same as the recently delivered one
        if object_type == recently_delivered:
            reward += 1.0  # Encourage same type pick
            reward_components['consistency_bonus'] += 1.0

    # Penalize entering yellow zones when holding an object
    if current_holding != rw4t_utils.Holding_Obj.empty.value:
        # Calculate the next position based on the action taken
        next_pos = current_pos.copy()
        if action == rw4t_utils.RW4T_LL_Actions.go_up.value:
            next_pos[1] -= 1
        elif action == rw4t_utils.RW4T_LL_Actions.go_down.value:
            next_pos[1] += 1
        elif action == rw4t_utils.RW4T_LL_Actions.go_left.value:
            next_pos[0] -= 1
        elif action == rw4t_utils.RW4T_LL_Actions.go_right.value:
            next_pos[0] += 1
        
        # Ensure the next position is within bounds
        if (0 <= next_pos[0] < current_map.shape[1]) and (0 <= next_pos[1] < current_map.shape[0]):
            next_area_type = current_map[next_pos[1], next_pos[0]]
            if next_area_type == rw4t_utils.RW4T_State.yellow_zone.value:
                reward -= 1.0  # Penalize for entering a danger zone
                reward_components['danger_zone_penalty'] -= 1.0

    return reward, reward_components

\end{minted}
\end{tcolorbox}

\begin{tcolorbox}[title={Example Low-Level Preference Reward for Rescue World},breakable, label={box:reward_rw_low}]
\begin{minted}[fontsize=\small,baselinestretch=1]{python}
def get_low_level_pref_gpt(state: Dict, option: int, action: int) -> Tuple[float, Dict[str, float]]:
    '''
    state: the current state of the environment.
    option: the option (subtask) selected by the agent in the current state.
    action: the action that the agent is about to perform in the current state.
    '''

    reward = 0.0
    reward_components = {}

    # Extract necessary information from the state
    agent_pos = state['pos']
    map_state = state['map']
    current_cell = map_state[agent_pos[1], agent_pos[0]]

    # Check if the agent is in a danger zone (yellow zone)
    is_in_yellow_zone = current_cell == rw4t_utils.RW4T_State.yellow_zone.value

    # Determine if the current option is a delivery option
    is_delivery_option = option in [
        rw4t_utils.RW4T_HL_Actions_EZ.deliver_circle.value,
        rw4t_utils.RW4T_HL_Actions_EZ.deliver_square.value
    ]

    # Apply a penalty if the agent is delivering and currently in a yellow zone
    if is_delivery_option and is_in_yellow_zone:
        danger_zone_penalty = -5.0  # User defined penalty
        reward += danger_zone_penalty
        reward_components['danger_zone_penalty'] = danger_zone_penalty
    else:
        reward_components['danger_zone_penalty'] = 0.0

    # No additional reward for being outside danger zones
    return reward, reward_components

\end{minted}
\end{tcolorbox}

\begin{tcolorbox}[title={Example High-level Preference Reward for Rescue World},breakable, label={box:reward_rw_high}]
\begin{minted}[fontsize=\small,baselinestretch=1]{python}
def get_high_level_pref_gpt(state: Dict, prev_option: int, option: int) -> Tuple[float, Dict[str, float]]:
    '''
    state: the current state of the environment.
    prev_option: the last option (subtask) executed by the agent to reach the current state.
    option: the option (subtask) the agent is about to perform in the current state.
    '''
    
    reward = 0.0
    reward_components = {}

    # Determine the type of the previous and current option
    prev_pick_type = None
    curr_pick_type = None
    
    if prev_option == rw4t_utils.RW4T_HL_Actions_EZ.deliver_circle.value:
        prev_pick_type = rw4t_utils.RW4T_State.circle.value
    elif prev_option == rw4t_utils.RW4T_HL_Actions_EZ.deliver_square.value:
        prev_pick_type = rw4t_utils.RW4T_State.square.value

    if option == rw4t_utils.RW4T_HL_Actions_EZ.go_to_circle.value:
        curr_pick_type = rw4t_utils.RW4T_State.circle.value
    elif option == rw4t_utils.RW4T_HL_Actions_EZ.go_to_square.value:
        curr_pick_type = rw4t_utils.RW4T_State.square.value

    # Count remaining objects of each type on the map
    circle_count = (state['map'] == rw4t_utils.RW4T_State.circle.value).sum()
    square_count = (state['map'] == rw4t_utils.RW4T_State.square.value).sum()

    # Add preference reward based on the user preference
    if prev_pick_type is not None:
        if curr_pick_type == prev_pick_type:
            if (curr_pick_type == rw4t_utils.RW4T_State.circle.value and circle_count > 0) or \
               (curr_pick_type == rw4t_utils.RW4T_State.square.value and square_count > 0):
                reward += 5.0  # Reward for picking the same type if available
                reward_components['same_type_pick_bonus'] = 5.0
            else:
                reward_components['same_type_pick_bonus'] = 0.0
        else:
            reward_components['same_type_pick_bonus'] = 0.0

    return reward, reward_components

\end{minted}
\end{tcolorbox}
~
\subsubsection{iTHOR}
~
\begin{tcolorbox}[title={Example Flat Preference Reward for iTHOR},breakable, label={box:reward_pnp_flat}]
\begin{minted}[fontsize=\small,baselinestretch=1]{python}
def get_flat_sa_pref_gpt(state: Dict, action: int) -> Tuple[float, Dict[str, float]]:
    """
    Computes the user preference reward based on agent's proximity to the stool
    and preferences for alternating object types during the pickup/drop process.

    state: the current state of the environment.
    action: the current action the agent is performing.
    """
    reward = 0.0
    reward_components = {
        'stool_penalty': 0.0,
        'alternating_bonus': 0.0
    }
    
    # Agent and stool position
    agent_pos = state['agent_pos']
    stool_pos = state['stool_pos']

    # Calculate distance to stool
    distance_to_stool = np.linalg.norm(np.array(agent_pos) - np.array(stool_pos))

    # If the agent is close to the stool while dealing with eggs, apply penalty
    dealing_with_eggs = (state['egg_1_state'] in [0, 1] or state['egg_2_state'] in [0, 1])
    if dealing_with_eggs and distance_to_stool < 1.5:
        reward -= 2.0  # Penalize for being too close to the stool
        reward_components['stool_penalty'] = -2.0

    # Check if the action is a pickup or drop intention
    if action in [PnP_LL_Actions.index("PickupNearestTarget"), PnP_LL_Actions.index("PutHeldOnReceptacle")]:
        # Determine last placed object type to encourage alternating pick-up
        last_picked = 'apple' if (state['apple_1_state'] == 2 or state['apple_2_state'] == 2) else 'egg'
        available_apples = (state['apple_1_state'] == 0 or state['apple_2_state'] == 0)
        available_eggs = (state['egg_1_state'] == 0 or state['egg_2_state'] == 0)

        if last_picked == 'apple' and available_eggs:
            reward += 5.0  # Encourage picking eggs if last placed was apple
            reward_components['alternating_bonus'] = 5.0
        elif last_picked == 'egg' and available_apples:
            reward += 5.0  # Encourage picking apples if last placed was egg
            reward_components['alternating_bonus'] = 5.0

    return reward, reward_components
\end{minted}
\end{tcolorbox}

\begin{tcolorbox}[title={Example Low-Level Preference Reward for iTHOR},breakable, label={box:reward_pnp_low}]
\begin{minted}[fontsize=\small,baselinestretch=1]{python}
def get_low_level_pref_gpt(state: Dict, option: int, action: int) -> Tuple[float, Dict[str, float]]:
    '''
    state: the current state of the environment.
    option: the option (subtask) selected by the agent in the current state.
    action: the action that the agent is about to perform in the current state.
    '''

    # Define rewards and thresholds
    stool_penalty = -2.0  # Penalty for being too close to the stool
    stool_avoidance_radius = 1.5  # Distance within which to penalize for being too close to the stool

    # Initialize preference reward and its components
    reward = 0.0
    reward_components = {"stool_avoidance_penalty": 0.0}

    # Get stool and agent positions
    stool_pos = np.array(state["stool_pos"])
    agent_pos = np.array(state["agent_pos"])

    # Calculate distance between agent and stool
    dist_to_stool = np.linalg.norm(agent_pos - stool_pos)

    # Determine if the current option involves interacting with eggs
    interacting_with_egg = option in {PnP_HL_Actions.pick_egg.value, PnP_HL_Actions.drop_egg.value}

    # Apply penalty if the agent is too close to the stool and dealing with eggs
    if interacting_with_egg and dist_to_stool < stool_avoidance_radius:
        reward += stool_penalty
        reward_components["stool_avoidance_penalty"] = stool_penalty

    return reward, reward_components
\end{minted}
\end{tcolorbox}

\begin{tcolorbox}[title={Example High-level Preference Reward for iTHOR},breakable, label={box:reward_pnp_high}]
\begin{minted}[fontsize=\small,baselinestretch=1]{python}
def get_high_level_pref_gpt(state: Dict, prev_option: int, option: int) -> Tuple[float, Dict[str, float]]:
    # Initialize reward and components
    preference_reward = 0.0
    reward_components = {
        "alternation_bonus": 0.0,
        "not_alternate_penalty": 0.0
    }
    
    # Determine the type of object just placed and the type to pick
    if prev_option == PnP_HL_Actions.drop_apple.value:
        last_placed_type = "apple"
    elif prev_option == PnP_HL_Actions.drop_egg.value:
        last_placed_type = "egg"
    else:
        last_placed_type = None

    if option == PnP_HL_Actions.pick_apple.value:
        current_pick_type = "apple"
    elif option == PnP_HL_Actions.pick_egg.value:
        current_pick_type = "egg"
    else:
        current_pick_type = None

    # Check items state to determine if alternation is possible
    remaining_apples = state["apple_1_state"] == 0 or state["apple_2_state"] == 0
    remaining_eggs = state["egg_1_state"] == 0 or state["egg_2_state"] == 0

    # Encourage alternating picking
    if last_placed_type and current_pick_type:
        if last_placed_type != current_pick_type:
            # Give a bonus for alternating types
            preference_reward += 8.0  # Scaled to be significant but less than the task completion reward
            reward_components["alternation_bonus"] = 8.0
        elif (current_pick_type == "apple" and remaining_eggs) or (current_pick_type == "egg" and remaining_apples):
            # Apply a penalty if not alternating but possible
            preference_reward += -2.0
            reward_components["not_alternate_penalty"] = -2.0

    return preference_reward, reward_components
\end{minted}
\end{tcolorbox}
~
\subsubsection{Kitchen}
~
\begin{tcolorbox}[title={Example Flat Preference Reward for Kitchen},breakable, label={box:reward_kitchen_flat}]
\begin{minted}[fontsize=\small,baselinestretch=1]{python}
def get_flat_sa_pref_gpt(state: Dict, action: int) -> Tuple[float, Dict[str, float]]:
    '''
    state: the current state of the environment.
    action: the (low-level) action that the agent is about to perform in the current state.
    '''
    # Define a map for action directions
    action_effect_map = {
        0: (0, -1),  # move up
        1: (0, 1),   # move down
        2: (1, 0),   # move right
        3: (-1, 0),  # move left
        4: (0, 0)    # idle
    }
    
    reward = 0.0
    reward_components = {
        'tomato_chopped': 0.0,
        'onion_chopped': 0.0,
        'lettuce_chopped': 0.0
    }

    # Check positions of chopped ingredients
    chopped_tomato_exists = state['ChoppedTomato'].any()
    chopped_onion_exists = state['ChoppedOnion'].any()
    chopped_lettuce_exists = state['ChoppedLettuce'].any()

    # Calculate agent's new position based on the action
    agent_pos = state['agent'].argmax()
    agent_y, agent_x = divmod(agent_pos, state['agent'].shape[1])
    dy, dx = action_effect_map[action]
    new_pos = (agent_y + dy, agent_x + dx)
        
    # Check if the agent is chopping at the new position
    is_near_cutboard = state['Cutboard'][new_pos] == 1
    chopping_tomato_active = is_near_cutboard and state['ChoppingTomato'][new_pos] == 1
    chopping_onion_active = is_near_cutboard and state['ChoppingOnion'][new_pos] == 1
    chopping_lettuce_active = is_near_cutboard and state['ChoppingLettuce'][new_pos] == 1

    # Add preference rewards
    if chopping_tomato_active and not chopped_tomato_exists:
        reward_components['tomato_chopped'] += 0.1  # Encourage chopping tomato
    if chopped_tomato_exists and chopping_onion_active and not chopped_onion_exists:
        reward_components['onion_chopped'] += 0.2  # Encourage chopping onion after chopping tomato
    if chopped_onion_exists and chopping_lettuce_active and not chopped_lettuce_exists:
        reward_components['lettuce_chopped'] += 0.3  # Encourage chopping lettuce after chopping onion

    reward = sum(reward_components.values())
    
    return reward, reward_components
\end{minted}
\end{tcolorbox}

\begin{tcolorbox}[title={Example High-level Preference Reward for Kitchen},breakable, label={box:reward_kitchen_high}]
\begin{minted}[fontsize=\small,baselinestretch=1]{python}
def get_high_level_pref_gpt(state: Dict, prev_option: int, option: int) -> Tuple[float, Dict[str, float]]:
    '''
    state: the current state of the environment.
    prev_option: the last option (subtask) executed by the agent to reach the current state.
    option: the option (subtask) the agent is about to perform in the current state.
    '''
    
    reward = 0.0
    reward_components = {
        "onion_after_tomato": 0.0,
        "lettuce_after_onion": 0.0,
        "avoid_extra_chop": 0.0,
    }

    # Define option indices for ease of reference
    CHOP_TOMATO = 0
    CHOP_LETTUCE = 1
    CHOP_ONION = 2

    # Check for chopped states
    tomato_chopped = state['ChoppedTomato'].any()
    onion_chopped = state['ChoppedOnion'].any()
    lettuce_chopped = state['ChoppedLettuce'].any()

    # User preferences
    if prev_option == CHOP_TOMATO and option == CHOP_ONION:
        reward += 0.5
        reward_components["onion_after_tomato"] = 0.5

    if prev_option == CHOP_ONION and option == CHOP_LETTUCE:
        reward += 0.5
        reward_components["lettuce_after_onion"] = 0.5

    # Avoid chopping ingredients again if they are already chopped
    if (option == CHOP_TOMATO and tomato_chopped) or \
       (option == CHOP_ONION and onion_chopped) or \
       (option == CHOP_LETTUCE and lettuce_chopped):
        reward -= 1.0
        reward_components["avoid_extra_chop"] = -1.0

    return reward, reward_components
\end{minted}
\end{tcolorbox}
 \newpage
\section{User Study Details}
\label{app:userstudy}
\subsection{Experiment Protocol}
We conducted a user study to evaluate how well agent policies trained with hierarchical rewards (experimental group) generated by language models are perceived to align with given behavioral specifications compared to those trained with flat rewards (control group), also generated using language models.
Each participant was randomly assigned to either the Rescue World or Kitchen domain. 

\begin{enumerate}
    \item \textbf{Consent and Study Overview.}
    Participants were first presented with a detailed overview of the study, including its purpose, procedures, and any potential risks. 
    An IRB-approved consent form was provided, and participants were required to give informed consent before proceeding.
    \item \textbf{Demographic Questionnaire.}  
    After providing consent, participants completed a brief demographic questionnaire, where we asked for their age and sex.
    \item \textbf{Domain Introduction.}  
    Participants were introduced to the assigned domain through textual descriptions accompanied by screenshots. 
    This step was designed to ensure that they had sufficient context to understand the environment and the tasks performed by the agent.
    \item \textbf{Presentation of Behavioral Specifications and Attention Checks.}  
    Next, participants were shown the behavioral specifications the agent was expected to follow. 
    To ensure they carefully read these specifications, we included attention check questions.
    For example, in the Rescue World domain, where the agent must handle two object types (food and medical kits) and may encounter avoid danger zones (marked by yellow grids), part of the \textit{safety} specification states that the robot should ``avoid yellow danger zones when it is delivering an object''. 
    We asked participants, whether according to the specifications, it would be considered safe for the robot to ``go through danger zones while delivering food''.
    While participants were not required to answer these questions correctly to proceed, we filtered out all responses with incorrect answers during data analysis to ensure data quality.
    \item \textbf{Video Evaluation.} 
    To ensure consistent evaluation, only policies that successfully completed the task were shown.
    Additionally, to control for variability, all videos shown to a participant were drawn from policies trained with the same random seed, although they could originate from different reward candidates.
    In the \textit{Rescue World} domain, each participant viewed 6 videos: 3 showing policies trained with flat rewards $\tilde{r}_{flat}$ and 3 showing policies trained with hierarchical rewards $(\tilde{r}_H, \tilde{r}_L)$. 
    In the \textit{Kitchen} domain, participants viewed 4 videos: 2 per reward method. Fewer videos were shown in this domain because, in some cases, only 2 out of 8 flat reward candidates produced policies capable of successfully completing the task.
    Questions for each video appeared on the same page, and participants were allowed to replay the videos as many times as they wished.
    After each video, participants first answered a multiple-choice question to verify they had watched the video (e.g., ``What order did the robot deliver the objects in?'').  
    They were then asked to rate how well the agent's behavior aligned with the specified behaviors using a 5-point scale, with 1 indicating ``least aligned'' and 5 indicating ``most aligned''. 
    Participants could optionally provide any comments explaining their ratings.
    \item \textbf{Final Feedback and Compensation.}  
    At the end of the study, participants were invited to leave any additional feedback about their experience. 
    On average, participants took 14.2 minutes to complete the study (SD = 8.7 minutes).
    All participants who completed the full study were compensated \$3 for their time.
\end{enumerate}

\subsection{Participants}
We conducted the user study on Prolific, recruiting a total of 40 participants from the United States.
After applying attention check filters, we obtained usable data from 30 participants, with 15 participants assigned to each domain. 
Among these 30 participants, the youngest was 20, the oldest was 67, and the median age was 35.
The sex distribution was fairly balanced, with 14 female participants, 15 male participants, and 1 participant identifying as ``non-binary / third gender.''

\subsection{Data Analysis}
We filtered out data from participants who incorrectly answered the attention check questions about the behavioral specifications.
For participants who passed the attention checks, if they answered a video content question incorrectly, we excluded their ratings for that specific video but retained their responses for other videos.

To generate Fig.~\ref{fig:human-ratings}, we computed the average rating each participant assigned to \textit{Flat} and \textit{Hier} policies, and then aggregated these per-participant means to report group-level comparisons.
Since each participant evaluated both \textit{Flat} and \textit{Hier} policies trained with the same random seed, we used the Wilcoxon Signed-Rank test to assess the statistical significance of the observed rating differences \cite{hoffman2020primer}.
All reported results are based on the filtered dataset, with the final sample sizes specified in the Participants section.
 \newpage
\section{Code Availability and Release}
A repository containing the source code developed for this project is available at \href{https://github.com/unhelkarlab/hrdl}{\url{https://github.com/unhelkarlab/hrdl}}.
\fi




\end{document}